\def\Decimal{.000}
\def\Ulinehelp#1.#2 {
  #1.#2\setbox0=\hbox{#1\Decimal}\hspace{-\wd0}{\if\relax#2\relax%
    \uline{\phantom{#1.0}}\else\uline{\phantom{#1.#2}}\fi}%
    }
\appto\TPTnoteSettings{\footnotesize}
\theoremstyle{thmstyleone}
\newtheorem{theorem}{Theorem}
\newtheorem{proposition}{Proposition} 
\theoremstyle{thmstyleone}
\newtheorem{remark}{Remark}
\theoremstyle{thmstylethree}
\newtheorem{definition}{Definition}
\definecolor{red(ncs)}{rgb}{0.77, 0.01, 0.2}
\renewcommand{\orcidlogo}{
  \includegraphics[width=10pt]{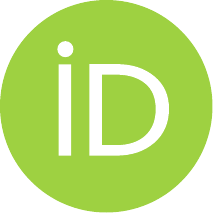}
}
\renewcommand{\orcid}[1]{\href{#1}{\orcidlogo}}
\newcommand\thefontsize{The current font size is: \f@size pt}
\let\orig@maketitle\@maketitle
\renewcommand{\@maketitle}{\vspace{-41pt}\orig@maketitle}
\let\originalleft\left
\let\originalright\right
\renewcommand{\left}{\mathopen{}\mathclose\bgroup\originalleft}
\renewcommand{\right}{\aftergroup\egroup\originalright}
\NewDocumentCommand\bbm{}{ \begin{bmatrix} }
\NewDocumentCommand\ebm{}{ \end{bmatrix} }
\NewDocumentCommand\Vector{m}{ \boldsymbol{\mathbf{#1}} }
\NewDocumentCommand\Norm{m}{ \left\Vert#1\right\Vert }
\NewDocumentCommand{\set}{g}{%
  \mathcal{X}\IfNoValueTF{#1}{}{_{#1}}%
}
\newcommand{\infset}{\set{\Hat{f}}}
\newcommand{\ginfset}{\set{\mathrm{greedy}}}
\newcommand{\xrand}{\mathbf{x}_\text{rand}}
\newcommand{\edit}[1]{{#1}}
\begin{document}

\title{Greedy Heuristics for Sampling-Based Motion Planning in High-Dimensional State Spaces}

\author*[1]{\fnm{Phone Thiha} \sur{Kyaw} \orcid{https://orcid.org/0000-0001-8790-8350}} \email{phone.thiha@robotics.utias.utoronto.ca}

\author[2]{\fnm{Anh Vu} \sur{Le} \orcid{https://orcid.org/0000-0002-4804-7540}} \email{leanhvu@tdtu.edu.vn}

\author[3]{\fnm{Rajesh Elara} \sur{Mohan} \orcid{https://orcid.org/0000-0001-6504-1530}} \email{rajeshelara@sutd.edu.sg}

\author[1]{\fnm{Jonathan} \sur{Kelly} \orcid{https://orcid.org/0000-0002-5528-6136}} \email{jonathan.kelly@robotics.utias.utoronto.ca}

\affil[1]{\orgdiv{Space \& Terrestrial Autonomous Robotic Systems (STARS) Laboratory}, \orgname{University of Toronto Institute for Aerospace Studies}, \orgaddress{\street{4925 Dufferin Sreet}, \city{Toronto}, \postcode{M3H 5T6}, \state{Ontario}, \country{Canada}}}

\affil[2]{\orgdiv{Advanced Intelligent Technology Research Group}, \orgname{Faculty of Electrical and Electronics Engineering, \mbox{Ton Duc Thang University}}, \orgaddress{\city{Ho Chi Minh City}, \postcode{700000}, \country{Vietnam}}}

\affil[3]{\orgdiv{ROAR Lab}, \orgname{Engineering Product Development, Singapore University of Technology and Design}, \orgaddress{\city{Singapore}, \postcode{487372}, \country{Singapore}}}

\abstract{
Informed sampling techniques accelerate the convergence of sampling-based motion planners by biasing sampling toward regions of the state space that are most likely to yield better solutions.
However, when the current solution path contains redundant or tortuous segments, the resulting informed subset may remain unnecessarily large, slowing convergence.
Our prior work addressed this issue by introducing the greedy informed set, which reduces the sampling region based on the maximum heuristic cost along the current solution path.
In this article, we formally characterize the behavior of the greedy informed set within Rapidly-exploring Random Tree (RRT*)-like planners and analyze how greedy sampling affects exploration and asymptotic optimality.
We then present Greedy RRT* (G-RRT*), a bi-directional anytime variant of RRT* that leverages the greedy informed set to focus sampling in the most promising regions of the search space.
Experiments on abstract planning benchmarks, manipulation tasks from the MotionBenchMaker dataset, and a dual-arm Barrett WAM problem demonstrate that G-RRT* rapidly finds initial solutions and converges asymptotically to optimal paths, outperforming state-of-the-art sampling-based planners.
}

\keywords{Sampling-based motion planning, optimal path planning, informed sampling, bidirectional search, greedy
heuristics, high-dimensional planning}
\maketitle

\section{Introduction}

Path planning is the problem of finding a collision-free path from an initial state to a goal state while also considering specific optimization objectives, such as minimizing path length or energy use, for example~\citep{lavalle2006planning}.
Many planning algorithms exist, including graph search, artificial potential fields, and sampling-based methods~\citep{elbanhawi2014sampling}.
However, it remains challenging to find collision-free, optimal paths, especially in high-dimensional state spaces; the general path planning problem is known to be PSPACE-hard~\citep{reif1979complexity}.

Sampling-based planners, such as the probabilistic roadmap~\citep[PRM]{kavraki1996probabilistic} and rapidly-exploring random tree~\citep[RRT]{lavalle2001randomized} algorithms, tackle the complexity of high-dimensional path planning by sacrificing completeness for efficiency, providing only probabilistic guarantees.
The asymptotically optimal variants PRM* and RRT*~\citep{karaman2011sampling} improve solution quality over time but remain computationally expensive due to their reliance on random sampling.
A substantial portion of the computational effort is often wasted exploring irrelevant portions of the state space.
To improve planning performance, it is crucial to focus sampling on promising regions of the problem domain.

Existing direct informed sampling methods \citep{gammell2014informed,gammell2018informed} mitigate inefficiency by defining bounded, hyperellipsoidal sampling regions, called \emph{informed sets}, based on the cost of the current solution.
While this significantly reduces the size of the search space, initial feasible solutions are often highly tortuous, containing many redundant states---intermediate states that can be removed through standard path-shortening techniques without affecting feasibility.
Such redundant states increase the solution cost and enlarge the informed set unnecessarily, reducing the likelihood of finding states that may be part of a better solution.
To address this shortcoming, \cite{kyaw2022energy} introduced a new direct informed sampling procedure that biases sampling based on heuristic information from the states that are part of the current solution, independent of their cost.
These states are used to define an alternative \emph{greedy informed set} that reduces the size of the informed sampling hyperellipsoid, improving both the efficiency and the convergence rate of planning.
Although the greedy informed set concept was introduced in \citep{kyaw2022energy}, no analysis of its properties was carried out.

\begin{figure}
    \centering
    \includegraphics[width=0.99\linewidth]{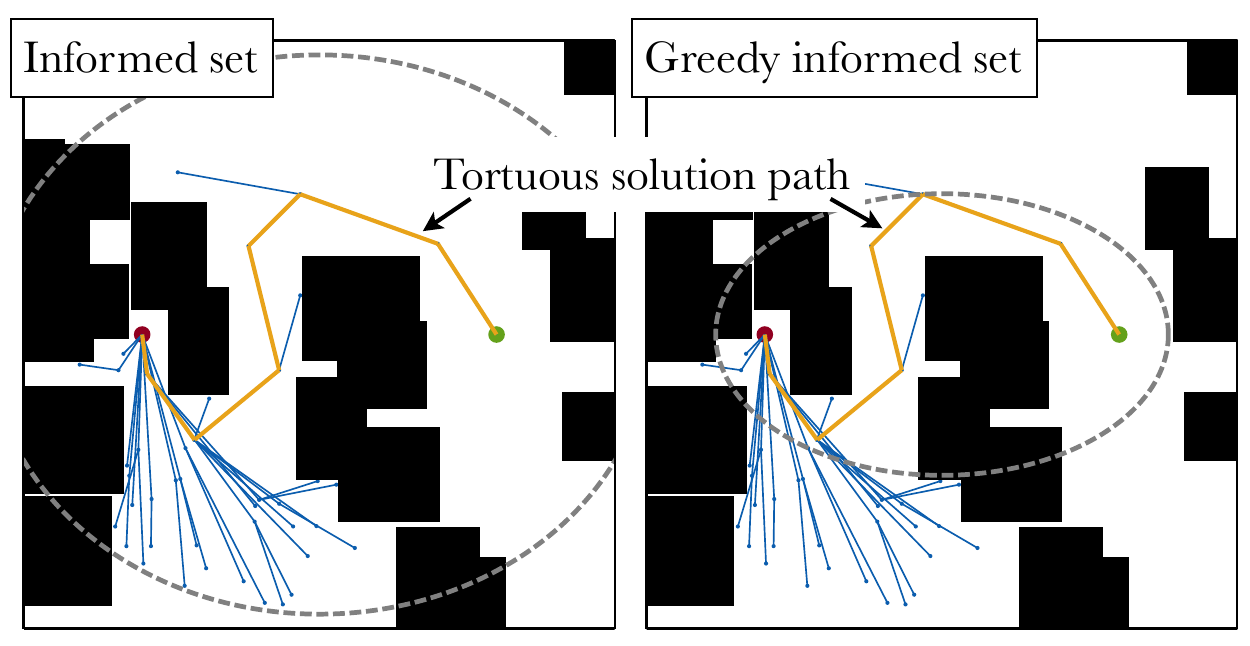}
\caption{Comparison of informed set sizes for a planning problem with a tortuous solution path (yellow) from the start (red) to the goal (green). The $L^2$ informed set (left) is defined by the current solution cost and covers a large ellipsoidal region. In contrast, the $L^2$ \emph{greedy informed set} (right) is defined using the state with the maximum admissible heuristic along the current solution path, substantially reducing the ellipsoidal area.}
    \label{fig:greedy_informed_set_steps}
\end{figure}

Informed sets are only useful to reduce the size of the search space once an initial solution has been found, making rapid discovery of a feasible path crucial for overall planning efficiency.
A key contribution of this work is a formal analysis of the performance and trade-offs associated with the \emph{greedy informed set}.
To accelerate the search for an initial solution, bidirectional planning can be employed.
Algorithms such as RRT-Connect~\citep{kuffner2000rrt} grow two trees---one from the start and one from the goal---that expand toward each other, guided by a connection heuristic. 
Building on this idea and incorporating the greedy informed set, we propose \emph{Greedy-RRT*} (G-RRT*), a bidirectional, asymptotically optimal sampling-based planner.
G-RRT* rapidly identifies initial solutions and exploits the greedy informed set to enhance solution quality, particularly in high-dimensional spaces.
\edit{
Throughout, we address geometric, holonomic planning with a path-length objective and do not consider kinodynamic planning. 
}
In summary, we make the following contributions herein.
\begin{itemize}
    \item We provide a formal analysis of properties of the greedy informed set and consider how possible algorithmic choices influence planning performance.
    \item We present Greedy-RRT* (G-RRT*), a bi-directional, asymptotically optimal sampling-based planning algorithm that leverages the proposed greedy heuristic.\footnote{Information on the OMPL implementations of G-RRT* is publicly available at \url{https://github.com/utiasSTARS/ompl}.
    \edit{
    A native implementation of G-RRT* is also available in VAMP.
    }
    }
    \item We prove the completeness and asymptotic optimality of G-RRT*, building upon existing results from the sampling-based planning literature.
	\item We show experimentally that greedy exploitation in large planning domains improves success and convergence rates over state-of-the-art methods, using both simulations and manipulation datasets.
\end{itemize}

The article is organized as follows.
Section~\ref{sec:related_work} covers related literature, while Section~\ref{sec:preliminaries} describes our notation and problem formulation.
Section~\ref{sec:greedy_informed_set} defines the greedy informed set, while Section~\ref{sec:grrt_star} details the G-RRT* algorithm.
Section~\ref{sec:analysis} analyzes the properties of G-RRT*.
Section~\ref{sec:experiments} demonstrates the performance of G-RRT* in simulations and through manipulation experiments.
The article concludes with some discussion in Section~\ref{sec:conclusion}.
\section{Related Work}
\label{sec:related_work}

This section first reviews the existing literature on sampling-based planning algorithms (Section~\ref{sec:sampling_based_planning_review}), and then discusses methods for accelerating their convergence (Section~\ref{sec:sampling_review}).

\subsection{Sampling-Based Motion Planning}
\label{sec:sampling_based_planning_review}

Sampling-based motion planners can be broadly classified into multiple-query and single-query categories.
Multiple-query planners, such as probabilistic roadmaps~\citep{kavraki1996probabilistic,hsu1998finding}, construct a graph of collision-free paths that can be reused for different start-goal pairs. 
In contrast, single-query planners expand a tree toward randomly sampled states to solve individual planning problems; rapidly-exploring random trees~\citep{lavalle2001randomized} are a well-known example.
These sampling-based approaches are often effective in high-dimensional state spaces and reduce computation by avoiding explicit obstacle representations, unlike deterministic graph-search planners.
However, they typically yield only feasible (collision-free) paths and do not guarantee optimality.

\citet{karaman2011sampling} introduced PRM* and RRT*, optimal variants of PRM and RRT that guarantee asymptotic optimality.
RRT* expands a tree into free space through random sampling, but unlike RRT, it considers nearby vertices to select the best parent and incrementally rewires them to improve path quality.
This rewiring enables RRT* to converge asymptotically to the optimal solution, a key property in sampling-based motion planning.

The RRT$^{\#}$ algorithm~\citep{arslan2013use} extends the local rewiring of RRT* to the global level using dynamic programming, accelerating convergence by removing states that cannot improve the current solution.
\citet{karaman2011anytime} proposed a branch-and-bound pruning scheme that deletes vertices whose cost-to-come plus a lower bound on the cost-to-go exceeds the cost of the current best path.
This pruning step eliminates vertices that are unlikely to contribute to better solutions and enhances real-time performance.
Quick-RRT*~\citep{jeong2019quick} refines parent selection and rewiring by also considering ancestors of nearby vertices, up to a user-defined depth, as candidate parents.
Similarly, F-RRT*~\citep{liao2021f} enhances initial path quality and convergence by generating random parent vertices close to obstacles.

Various extensions have been proposed to accelerate the convergence of RRT*, including bi-directional variants \citep{kuffner2000rrt,klemm2015rrt} and methods that relax optimality to near-optimality \citep{dobson2014sparse,salzman2016asymptotically}.
Our proposed approach similarly employs bi-directional search to rapidly generate initial solutions.
However, because these methods sample uniformly across the entire state space, their convergence slows markedly in high-dimensional problems, increasing the computational effort required to find optimal solutions.

\subsection{State Space Sampling Methods}
\label{sec:sampling_review}

Prior work on sampling-based planning has emphasized ways to improve the sampling process.
While uniform sampling preserves global optimality, performance can often be improved by biasing samples toward regions more likely to yield better solutions.
For example, \citet{bialkowski2013free} proposed a biased sampling method that records past collisions to direct future samples away from obstacles.
Similarly, \citet{kim2014cloud} used a generalized Voronoi graph to decompose free space into spheres of varying radii, forming a dynamic sampling `cloud' that evolves as the best solution is refined.

In P-RRT* and PQ-RRT* \citep{qureshi2016potential,li2020pq}, random sampling is guided by artificial potential fields towards more promising state space regions~\citep{khatib1986real}, trading off exploration for exploitation.
Compared with traditional rejection sampling, these free-space-biased methods quickly find improved paths and reduce the number of rejected samples, but they continue to draw samples that do not improve the current solution.

Several extensions of RRT* have been developed to accelerate convergence.
\citet{akgun2011sampling} introduced a bi-directional version of RRT* that uses path-biased sampling and heuristic sample rejection for high-dimensional problems.
Related work~\citep{islam2012rrt,alterovitz2011rapidly,faroni2024adaptive} applies similar path-biasing and refinement ideas, achieving faster convergence but often favouring locally optimal solutions over globally better ones.
Techniques like rectangular rejection sampling~\citep{ferguson2006anytime,otte2013c} can improve convergence, but their effectiveness decreases as dimensionality increases.

To address this scalability issue, \citet{gammell2014informed,gammell2018informed} proposed \emph{informed sampling}, implemented in Informed RRT*.
Once a feasible path is found, the algorithm samples only within an $n$-dimensional hyperellipsoid---called the $L^2$ informed set---bounded by the current solution cost, which shrinks as better solutions are found.
\edit{
This principle has since been incorporated into batch-wise heuristic search planners, including BIT*~\citep{gammell2020batch}, AIT* and EIT*~\citep{strub2022adaptively}, and the recent bidirectional BLIT*~\citep{wang2025asymptotically}.
}

Although informed sampling is highly effective, early feasible paths often include redundant states, producing tortuous trajectories with many unnecessary twists and turns.
These paths can yield overly large informed sets, which slow convergence, particularly in high-dimensional state spaces.
Consequently, later studies~\citep{kim2015informed,jiang2020informed,wilson2025aorrtc} integrated path simplification and anytime refinement to tighten the informed set and improve solution quality, achieving state-of-the-art results on the Motion Bench Maker dataset~\citep{chamzas2021motionbenchmaker}.
We pursue a different strategy: leveraging the greedy heuristic proposed in \citep{kyaw2022energy} within a bi-directional framework to prioritize sampling in regions most likely to yield improvement.
By coupling greedy state-space biasing with efficient tree growth, our method accelerates convergence without sacrificing global solution optimality.
\section{Preliminaries \label{sec:preliminaries}}

We begin by defining the notation used throughout the paper and introducing key planning concepts in Section~\ref{sec:notation}.
Section~\ref{sec:feasible_and_optimal} then reviews the formal definitions of the feasible and optimal path-planning problems, providing a foundation for our analysis of the G-RRT* algorithm.

\subsection{Notation}
\label{sec:notation}

Let the state space of a path planning problem be denoted by $\mathcal{X} \subseteq \mathbb{R}^n$, and let $\mathbf{x} \in \mathcal{X}$ denote a single state.
Let $\mathcal{X}_{\text{obs}} \subsetneq \mathcal{X}$ be the set of states in collision with obstacles, and let $\mathcal{X}_{\text{free}} = \mathcal{X} \setminus \mathcal{X}_{\text{obs}}$ be the set of collision-free states.
A path is a continuous function $\pi : [0,1] \to \mathcal{X}$, and we denote by $\Sigma$ the set of all such paths.
The initial and goal states are denoted by $\mathbf{x}_{I} \in \mathcal{X}_{\text{free}}$ and $\mathbf{x}_{G} \in \mathcal{X}_{\text{free}}$, respectively.
Each planning problem is associated with a cost function $c : \Sigma \to \mathbb{R}{\ge 0}$ that maps a path to a non-negative real value.
Herein, we define the cost of a path as its length under the standard Euclidean metric on $\mathbb{R}^n$.

As is standard in sampling-based motion planning, we represent paths using graphs and, in particular, restrict our attention to trees, which are directed acyclic graphs.
A tree is incrementally constructed by connecting states through feasible transitions that respect collision constraints.
Let $\mathcal{T} = (V, E)$ denote such a tree, where $V$ is a set of vertices and $E \subseteq V \times V$ is a set of directed edges.
Each vertex corresponds to a state $\mathbf{x} \in \mathcal{X}$, and each edge $(\mathbf{x}_a, \mathbf{x}_b) \in E$ represents a valid (collision-free) transition from $\mathbf{x}_a$ to $\mathbf{x}_b$.

With a slight abuse of notation, let $c(\mathbf{x}_a, \mathbf{x}_b)$ denote the cost of the path between two vertices $\mathbf{x}_a, \mathbf{x}_b \in V$ in the tree $\mathcal{T}$.
An admissible heuristic estimate of this cost is $\hat{c} : V \times V \to \mathbb{R}_{\ge 0}$, which satisfies
\begin{equation}
\forall\ \mathbf{x}_a, \mathbf{x}_b \in V, \quad \hat{c}(\mathbf{x}_a, \mathbf{x}_b) \leq c(\mathbf{x}_a, \mathbf{x}_b).
\end{equation}
The cost-to-come is $g(\mathbf{x}) = c(\mathbf{x}_I, \mathbf{x})$, representing the true cost of the optimal path from the initial vertex $\mathbf{x}_I$ to vertex $\mathbf{x}$.
Its admissible heuristic estimate is $\hat{g}(\mathbf{x}) = \hat{c}(\mathbf{x}_I, \mathbf{x})$.
Similarly, the cost-to-go is $h(\mathbf{x}) = c(\mathbf{x}, \mathbf{x}_G)$, the true cost of the optimal path from vertex $\mathbf{x}$ to the goal vertex $\mathbf{x}_G$.
Its admissible heuristic estimate is $\hat{h}(\mathbf{x}) = \hat{c}(\mathbf{x}, \mathbf{x}_G)$.
The total cost of the optimal path through $\mathbf{x}$ is $f(\mathbf{x}) = g(\mathbf{x}) + h(\mathbf{x})$, and its admissible heuristic estimate is $\hat{f}(\mathbf{x}) = \hat{g}(\mathbf{x}) + \hat{h}(\mathbf{x})$.
\edit{
Where unambiguous, the same symbol $g(\mathbf{x})$ also denotes the cost-to-come of $\mathbf{x}$ along its current path in a tree, which is an upper bound on this optimal value.
}

\subsection{Feasible and Optimal Path Planning}
\label{sec:feasible_and_optimal}

A planning problem instance is defined by the state space $\mathcal{X}_{\text{free}}$, an initial state $\mathbf{x}_I$, and a goal state $\mathbf{x}_G$, typically expressed as a tuple $\left(\mathcal{X}_{\text{free}}, \mathbf{x}_I, \mathbf{x}_G\right)$.
In general, two types of path planning problems may be considered: the \emph{feasible planning problem} and the \emph{optimal planning problem}.
We define these below.

\vspace{-0.5em}
\begin{definition}[Feasible path planning problem]
\label{def:feasible_planning}
Find a path $\pi^{\prime} \in \Sigma$ from $\mathbf{x}_I$ to $\mathbf{x}_G$ through collision-free space such that
\begin{equation}
\label{eqn:feasible_planning}
\begin{aligned}
    \pi^{\prime} \in \{\pi \in \Sigma \mid \pi(0)=\mathbf{x}_{I},\ \pi(1)=\mathbf{x}_{G}, \\ \forall s \in [0,1],\ \pi(s) \in \mathcal{X}_{\text{free}} \}.
\end{aligned}
\end{equation}
\end{definition}
\noindent There are usually many solutions to the feasible path planning problem.

\vspace{-0.5em}
\begin{definition}[Optimal path planning problem]
\label{def:optimal_planning}
Find a feasible path $\pi^*$ from $\mathbf{x}_I$ to $\mathbf{x}_G$ through collision-free space that minimizes the cost functional $c : \Sigma \to \mathbb{R}_{\geq 0}$:
\begin{equation}
\label{eqn:optimal_planning}
\begin{aligned}
    \pi^* = \underset{\pi \in \Sigma}{\operatorname{\arg\min}} \{c(\pi) \mid \pi(0)= \mathbf{x}_{I},\ \pi(1)=\mathbf{x}_{G}, \\[-2.3mm] \forall s \in [0,1],\ \pi(s) \in \set{\text{free}} \}.
\end{aligned}
\end{equation}
\end{definition}

A path planning algorithm is considered \emph{robustly feasible} if there exists a solution path with strong $\delta$-clearance, that is, remaining at least a distance $\delta$ from any obstacle in $\mathbb{R}^n$, for some $\delta > 0$~\citep{karaman2011sampling}.
A solution path $\pi^{*}$ is \emph{robustly optimal} if there exists another path $\pi_{0}$ in the same homotopy class, also with strong $\delta$-clearance, such that $c(\pi_0) = \min\{c(\pi) \mid \pi \text{ is feasible}\}$.
Notably, any formal guarantees provided by sampling-based algorithms are typically probabilistic in nature.
An algorithm is \emph{probabilistically complete} if the probability of finding a feasible path (when one exists) approaches one as the number of samples tends to infinity.
It is \emph{almost-surely asymptotically optimal} if the probability that the solution cost converges to the optimal cost approaches one as the number of samples tends to infinity.
Asymptotic optimality implies probabilistic completeness.
We refer interested readers to \cite{karaman2011sampling} for further details.
\section{The Greedy Informed Set}
\label{sec:greedy_informed_set}

Informed sampling-based planners focus the search on promising regions of the state space to find better paths once an initial solution has been found.
These regions correspond to the \emph{omniscient set} $\set{f}$, the collection of states that could yield a better solution, which is generally unknown because computing it requires solving the problem exactly~\citep{gammell2018informed}.
Heuristic-based estimates, such as the informed set $\infset$, are commonly used to approximate $\set{f}$.
The informed set is an admissible over-approximation of the omniscient set, defining a subset of $\set_{\text{free}}$ where samples are likely to improve the current solution, provided the heuristic never overestimates the true cost.
However, in problems with many homotopy classes or in high-dimensional spaces, the heuristic used in $\infset$ often provides very little information, resulting in an overly large informed set.
Consequently, the probability of drawing a random sample from $\infset$ that also belongs to $\set{f}$ is low.

The \emph{greedy informed set}, introduced by~\citet{kyaw2022energy}, addresses this limitation by eliminating direct dependence on the current solution cost.
Instead, it constructs a smaller hyperellipsoid from the states along the current solution path, which is bounded above by the admissible informed set.
This section formally defines the greedy informed set and examines how heuristic exploitation in greedy informed sampling influences optimality.
We consider only holonomic planning problems in $\mathbb{R}^n$ with a path-length objective under the Euclidean norm (i.e., where all degrees of freedom are directly controllable and the cost depends solely on geometric path length).
For any state $\mathbf{x} \in \mathbb{R}^n$, let $\hat{f}(\mathbf{x})$ denote the standard $L^2$ informed heuristic, which provides an admissible lower bound on the cost of any path from the initial state $\mathbf{x}_I$ to the goal state $\mathbf{x}_G$ passing through $\mathbf{x}$,
\begin{equation}
\label{eqn:l2-heuristic}
\Hat{f}(\mathbf{x}) = \Norm{\mathbf{x} - \mathbf{x}_I}_2 + \Norm{\mathbf{x} - \mathbf{x}_G}_2.
\end{equation}

\begin{definition}[$L^2$ greedy informed set]\label{def:greedy_informed_set}
Let $\mathbf{x}_{\text{max}}$ denote the state along the current solution path $\pi$ with the maximum admissible heuristic cost, defined by $\Hat{f}(\cdot)$ in~(\ref{eqn:l2-heuristic}),
\begin{equation}
    \mathbf{x}_{\text{max}} \coloneqq 
    \underset{\mathbf{x} \in \pi}{\operatorname{\arg\max}} 
    \left\{ \Hat{f}(\mathbf{x}) \right\}.
\end{equation}
The $L^2$ \emph{greedy informed set}, $\smash{\ginfset}$, is the subset of collision-free states, $\mathcal{X}_{\text{free}}$, consisting of all $\mathbf{x}$ whose heuristic costs are less than or equal to that of the greedily chosen state $\mathbf{x}_{\text{max}}$ along the current solution path (Figure~\ref{fig:greedy_informed_set}),
\begin{equation}
\label{eqn:greedy_informed_set}
    \ginfset = 
    \left\{
    \mathbf{x} \in \mathcal{X}_{\text{free}}
    \mid
    \Hat{f}(\mathbf{x}) \leq \Hat{f}(\mathbf{x}_{\text{max}})
    \right\}.
\end{equation}
\end{definition}

\begin{figure}[!t] 
\centering
\includegraphics[width=0.85\linewidth]{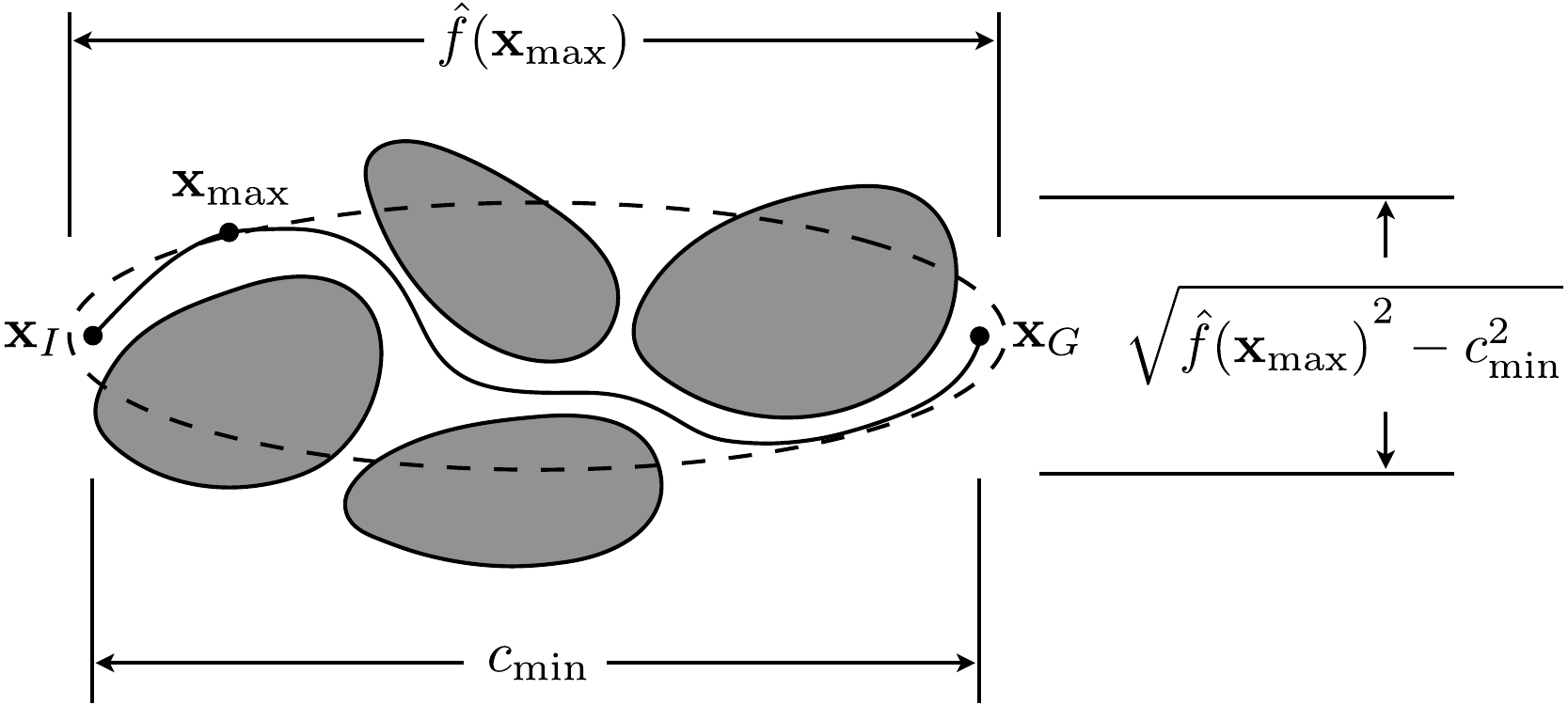}
\caption{
Illustration of the $L^2$ greedy informed set $\smash{\ginfset}$ in a planning problem with a path-length objective in $\mathbb{R}^2$.
The greedy subset (dashed ellipse) is defined by the hypothetical minimum cost $c_{\text{min}}$ from the initial state $\mathbf{x}_I$ to the goal state $\mathbf{x}_G$, and by the heuristic cost of the state along the solution path with the highest admissible value $\smash{\Hat{f}(\mathbf{x}_{\text{max}})}$, used as a transverse diameter.
}
\label{fig:greedy_informed_set}
\end{figure}

\edit{
\begin{proposition}[Inclusion of the greedy informed set within the informed set]
\label{prop:inclusion}
Let $\pi$ be the current solution path from $\mathbf{x}_I$ to $\mathbf{x}_G$ with cost $c_i = c(\pi)$, and let
$\smash{\mathcal{X}_{\Hat{f}} = \{\mathbf{x} \in \mathcal{X}_{\mathrm{free}} \mid \Hat{f}(\mathbf{x}) \le c_i\}}$ be the corresponding informed set, with $\Hat{f}(\cdot)$ as defined in \eqref{eqn:l2-heuristic}.
Then the greedy informed set satisfies
\begin{equation*}
\mathcal{X}_{\mathrm{greedy}} \subseteq \mathcal{X}_{\Hat{f}}.
\end{equation*}
\end{proposition}
\begin{proof}
By definition, $\Hat{f}(\mathbf{x})$ is an admissible heuristic, meaning it provides a lower bound on the cost of any path from $\mathbf{x}_I$ to $\mathbf{x}_G$ passing through $\mathbf{x}$.
Because the current solution $\pi$ is exactly such a path for any state $\mathbf{x} \in \pi$, admissibility guarantees that $\Hat{f}(\mathbf{x}) \leq c(\pi) = c_i$.
Since this holds for all states along $\pi$, it also holds specifically for the state with the maximum heuristic cost, ensuring $\Hat{f}(\mathbf{x}_{\text{max}}) \leq c_i$.
By Definition~\ref{def:greedy_informed_set}, any state $\mathbf{x} \in \smash{\ginfset}$ satisfies $\Hat{f}(\mathbf{x}) \leq \Hat{f}(\mathbf{x}_{\text{max}})$, which implies $\Hat{f}(\mathbf{x}) \leq c_i$.
Since $\mathbf{x} \in \mathcal{X}_{\mathrm{free}}$ and its heuristic cost is bounded by $c_i$, $\mathbf{x}$ is contained within the informed set $\smash{\mathcal{X}_{\Hat{f}}}$.
Thus, $\smash{\mathcal{X}_{\mathrm{greedy}} \subseteq \mathcal{X}_{\Hat{f}}}$.
\end{proof}
}

\edit{Because the greedy informed set is bounded above by the informed set (Proposition~\ref{prop:inclusion}), it yields smaller ellipsoidal regions than the original informed set, especially in high-dimensional problems with complex solution paths.}
Consequently, sampling from this smaller subset increases the likelihood of finding states that also belong to the omniscient set.
However, because the greedy informed set is constructed using knowledge from the current solution path---similar to path-biasing methods---an algorithm relying on it may in some cases bias the search toward locally optimal solutions, as the set might exclude portions of the omniscient set.
Therefore, continued exploration of other homotopy classes, that is, sampling from the informed set, remains essential for planners utilizing the greedy informed set to preserve asymptotic optimality guarantees.

Sampling from informed sets is a necessary condition for asymptotic optimality in sampling-based motion planning, since states along the optimal path always lie within these sets~\citep{gammell2018informed}.
The greedy informed set may also contain states from the optimal path, but only if the current solution path and the true optimal path belong to the same homotopy class.
In the following, we draw on basic concepts from homotopy theory~\citep{hatcher2002algebraic} to establish the conditions under which an algorithm sampling from the greedy informed set can still converge to the optimal solution (Theorem~\ref{theorem:optimality}), and to discuss cases where this property may fail to hold (Remark~\ref{remark:non_optimality}).

\begin{theorem}[Inclusion of the optimal path in the greedy informed set]
\label{theorem:optimality}
Let $\Hat{f}(\cdot)$ be the $L^2$ heuristic defined in~(\ref{eqn:l2-heuristic}).
Let $\pi$ be the current solution path and $\pi^*$ the optimal path between the same initial and goal states $\mathbf{x}_I$ and $\mathbf{x}_G$.
If $\pi$ and $\pi^*$ lie within the same homotopy class, then every state $\mathbf{x}^* \in \pi^*$ satisfies
\begin{equation}
\hat{f}(\mathbf{x}^*) \leq \hat{f}(\mathbf{x}_{\mathrm{max}}).
\end{equation}
Consequently, $\pi^* \subseteq \ginfset$.
\end{theorem}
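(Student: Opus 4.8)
The plan is to recast the inequality as a single set inclusion and then exploit the interplay between the convexity of the heuristic's sublevel sets and the minimality of $\pi^*$ within its homotopy class. First I would observe that $\hat{f}$ in~(\ref{eqn:l2-heuristic}) is a convex function of $\mathbf{x}$ (a sum of two Euclidean norms of affine maps), so each sublevel set $E_c := \{\mathbf{x} \in \mathbb{R}^n \mid \hat{f}(\mathbf{x}) \le c\}$ is a closed, convex, confocal prolate hyperspheroid with foci $\mathbf{x}_I$ and $\mathbf{x}_G$. By the definition of $\mathbf{x}_{\mathrm{max}}$ as the maximizer of $\hat{f}$ along $\pi$, the path satisfies $\pi \subseteq E$ where $E := E_{\hat{f}(\mathbf{x}_{\mathrm{max}})}$, so that $\ginfset = E \cap \mathcal{X}_{\text{free}}$. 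Since $\hat{f}(\mathbf{x}_I) = \hat{f}(\mathbf{x}_G) = \Norm{\mathbf{x}_I - \mathbf{x}_G}_2 =: c_{\min} \le \hat{f}(\mathbf{x}_{\mathrm{max}})$, both foci lie in $E$ (in its interior whenever $E$ is non-degenerate). The claim $\hat{f}(\mathbf{x}^*) \le \hat{f}(\mathbf{x}_{\mathrm{max}})$ for every $\mathbf{x}^* \in \pi^*$ is then equivalent to the inclusion $\pi^* \subseteq E$, which is what I would prove.

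The core argument is by contradiction: suppose $\pi^*$ leaves $E$. Because $\mathbf{x}_I, \mathbf{x}_G \in E$ and $\pi^*$ is continuous, the portion of $\pi^*$ outside $E$ decomposes into maximal excursions, each entering and leaving through $\partial E$. Let $P : \mathbb{R}^n \to E$ be the nearest-point projection onto the closed convex set $E$; $P$ is well defined, $1$-Lipschitz (non-expansive), and fixes every point of $E$. Applying $P$ to $\pi^*$ yields a path $\tilde\pi := P \circ \pi^*$ from $\mathbf{x}_I$ to $\mathbf{x}_G$ contained in $E$ and, by non-expansiveness, satisfying $c(\tilde\pi) \le c(\pi^*)$, with strict inequality once a non-trivial excursion is collapsed onto $\partial E$. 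If $\tilde\pi$ remains collision-free and homotopic to $\pi^*$ in $\mathcal{X}_{\text{free}}$, then $\tilde\pi$ is a strictly shorter path in the same homotopy class, contradicting the optimality of $\pi^*$ and forcing $\pi^* \subseteq E$.

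The main obstacle is precisely the validity of this retraction, and this is where the hypothesis $\pi \simeq \pi^*$ becomes essential: a priori, projecting an excursion back onto the strictly convex boundary $\partial E$ could drive the path through an obstacle or alter its homotopy class. The key observation I would develop is that escaping $E$ to pass an obstacle on its outer side, while the reference path $\pi \subseteq E$ passes the same obstacle on its inner side, changes the side on which the obstacle is encircled and hence changes the homotopy class. Consequently every obstacle that $\pi^*$ must avoid is avoided on the same ($E$-interior) side as $\pi$, so the straight-line homotopy $H(t, \mathbf{x}) = (1-t)\mathbf{x} + t\,P(\mathbf{x})$ contracting $\pi^*$ onto $\tilde\pi$ sweeps across no obstacle and keeps the path in $\mathcal{X}_{\text{free}}$. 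Equivalently, one may argue that the length-minimizer $\rho$ of $[\pi]$ constrained to $E \cap \mathcal{X}_{\text{free}}$ is already a free-space geodesic: strict convexity of the ellipsoid $E$ forbids a straight sub-segment of $\rho$ touching $\partial E$ at an interior point, while the points where $\rho$ meets $\partial E$ are forced by bending around an obstacle, a genuine free-space feature independent of the constraint; hence $\rho = \pi^*$ and $\pi^* \subseteq E$. I would close by recording the auxiliary bound $\hat{f}(\mathbf{x}) \le c(\gamma)$ for any point $\mathbf{x}$ on any path $\gamma$ from $\mathbf{x}_I$ to $\mathbf{x}_G$ (each Euclidean term is dominated by the corresponding arc length), which gives $\hat{f}(\mathbf{x}_{\mathrm{max}}) \le c(\pi)$ and $\hat{f}(\mathbf{x}^*) \le c(\pi^*) \le c(\pi)$; this weaker, cost-based containment is not by itself enough to place $\pi^*$ inside the smaller set $E$, which is exactly why the homotopy hypothesis, rather than optimality alone, must carry the proof.
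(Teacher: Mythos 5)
Your proposal follows the same high-level strategy as the paper's proof---assume a state of $\pi^*$ escapes the sublevel set, use the homotopy hypothesis to replace the excursion with a shorter path that stays inside, and contradict optimality---but the tactical execution differs in a way worth noting. Where the paper simply asserts that the ``direct (geodesic) subpath'' between the two boundary-crossing points is strictly shorter than the detour through $\mathbf{x}_c$ (its Eq.~(10)) and invokes optimal substructure, you make the length decrease rigorous by composing $\pi^*$ with the nearest-point projection onto the convex prolate hyperspheroid $E$, using non-expansiveness and strict convexity; this is cleaner and dimension-independent. Your auxiliary observation that $\hat{f}(\mathbf{x}) \le c(\gamma)$ for any point on any path $\gamma$, and that this cost-based bound is \emph{not} enough to place $\pi^*$ in the smaller greedy set, is also a useful clarification absent from the paper.

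That said, the step you correctly flag as ``the main obstacle''---showing that the projected path $P \circ \pi^*$ remains collision-free and homotopic to $\pi^*$---is exactly the step the paper's proof also leaves informal (it is hidden in the unproven claim that the geodesic subpath $\pi_{ab}$ is a feasible competitor in the same class). Your proposed resolution via ``the side on which the obstacle is encircled'' is a planar intuition: in $\mathbb{R}^n$ with $n \ge 3$, passing a compact obstacle on its ``outer'' versus ``inner'' side generally does \emph{not} change the homotopy class (the complement of a ball in $\mathbb{R}^3$ is simply connected), so that argument does not close the gap in the generality the theorem is stated. Your alternative route---characterizing the constrained minimizer $\rho$ of $[\pi]$ in $E \cap \mathcal{X}_{\text{free}}$ and using strict convexity of $\partial E$ to show it never genuinely touches the boundary except where forced by an obstacle---is the more promising direction, but it too needs an argument that the obstacle features supporting $\rho$ lie inside $E$ because $\pi$ must clear them as well. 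In short: your proof is at least as rigorous as the paper's, shares its one genuine lacuna, and is more honest about where that lacuna sits.
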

    
\begin{figure}[!tb] 
\centering
\includegraphics[width=0.65\linewidth]{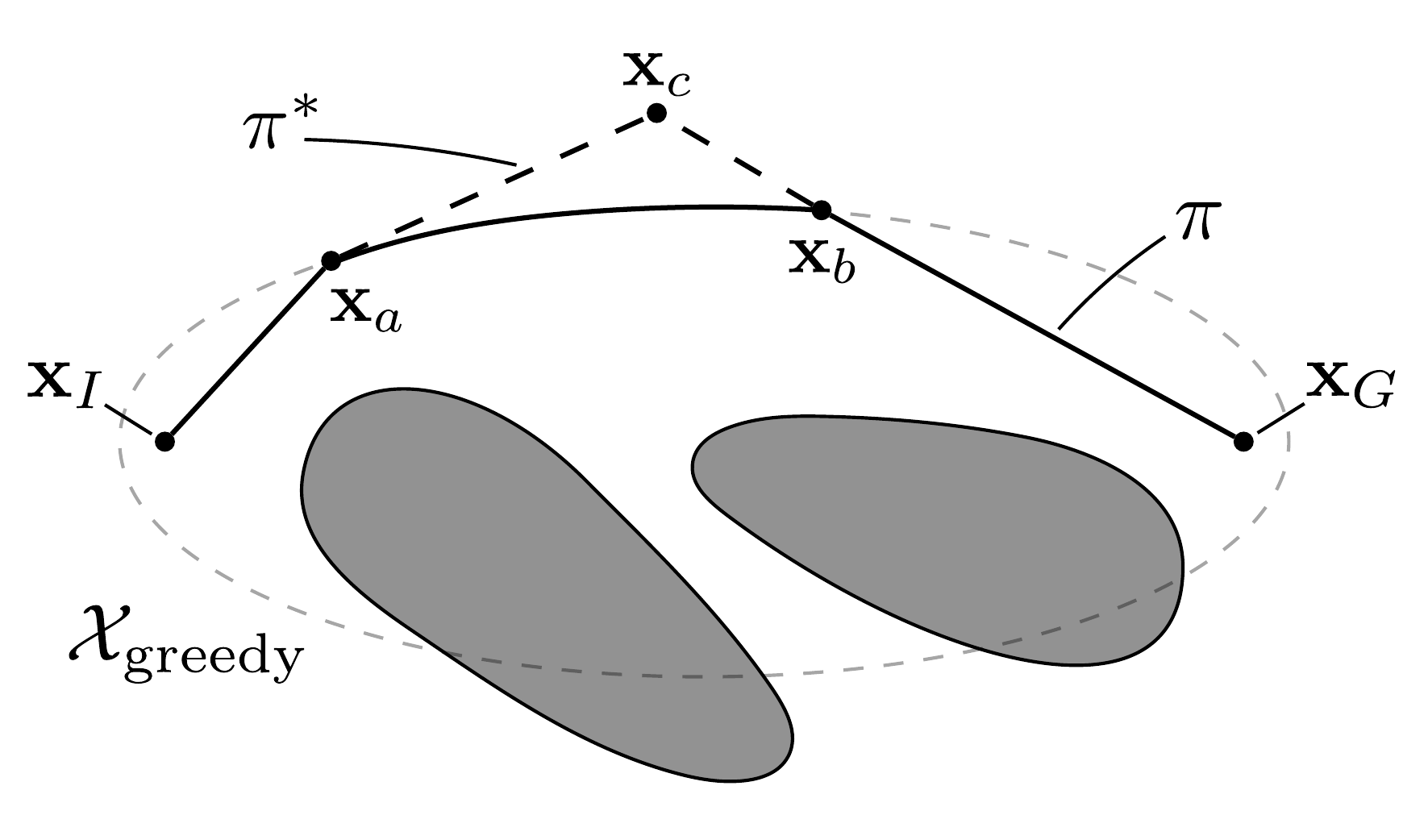}
\caption{
Illustration of the current solution path $\pi$ and another path $\pi^*$ within the same homotopy class, sharing the same initial and goal states, $\mathbf{x}_I$ and $\mathbf{x}_G$.
The greedy informed set $\ginfset$ (dashed gray ellipse) is constructed using the maximum heuristic cost along $\pi$.
The path $\pi^*$ intersects the boundary of $\ginfset$ at $\mathbf{x}_a$ and $\mathbf{x}_b$ and passes through a state $\mathbf{x}_c$ that lies outside this set.
}
\label{fig:theorem1}
\end{figure}

\begin{proof}
Consider the example shown in Figure~\ref{fig:theorem1}.
Let $\pi$ denote the current solution path in $\mathcal{X}_{\text{free}}$ from $\mathbf{x}_I$ to $\mathbf{x}_G$, and let $\mathbf{x}_a$ and $\mathbf{x}_b$ be two states along $\pi$ with the same maximum heuristic cost, 
$\Hat{f}(\mathbf{x}_a) = \Hat{f}(\mathbf{x}_b) = \Hat{f}(\mathbf{x}_{\text{max}})$.
Let the path $\pi$ be constructed such that every state $\mathbf{x'}$ on the subpath between $\mathbf{x}_{a}$ and $\mathbf{x}_{b}$ also satisfies $\Hat{f}(\mathbf{x'}) = \Hat{f}(\mathbf{x}_{\text{max}})$:
\begin{equation*}
\forall\,\mathbf{x'} \in \mathcal{Z}, \, 
\Hat{f}(\mathbf{x'}) = \Hat{f}(\mathbf{x}_{\text{max}}),
\end{equation*}
\edit{
where $\mathcal{Z} := \{\mathbf{x}' \in \pi_{ab} \mid \pi_{ab}(0) = \mathbf{x}_a,\, \pi_{ab}(1) = \mathbf{x}_b,\ \pi_{ab} \subset \pi\}$.
}
By Definition~\ref{def:greedy_informed_set}, $\ginfset$ is constructed using $\mathbf{x}_{\text{max}}$ of $\pi$.
We show that the optimal path $\pi^*$ also lies within $\ginfset$ if $\pi$ and $\pi^*$ belong to the same homotopy class.

Suppose that $\pi^*$ contains a state $\mathbf{x}_c$ not in $\ginfset$,
\begin{equation}
\label{eqn:thm1-not-inclusion}
\exists\, \mathbf{x}_c \in \pi^* \ \text{s.t.}\ \mathbf{x}_c \notin \ginfset.
\end{equation}
Since $\mathbf{x}_I, \mathbf{x}_G \in \ginfset$, $\pi^*$ must exit the hyperellipsoid at some point and later re-enter it.
Consequently, there exist at least two intersection points $\mathbf{x}_a$ and $\mathbf{x}_b$ between $\pi^*$ and the boundary of $\ginfset$.

Because $\pi$ and $\pi^*$ are homotopic, they can be continuously deformed into one another without intersecting obstacles while keeping endpoints fixed~\citep{hatcher2002algebraic}.
The same holds for their subpaths $\pi_{ab}$ and $\pi^*_{abc}$.
The cost of the direct (geodesic) subpath between $\mathbf{x}_a$ and $\mathbf{x}_b$ is strictly less than that of the detour passing through $\mathbf{x}_c$:
\begin{equation}
\label{eqn:thm1-cost-compare}
    c(\pi_{ab}) < c(\pi^*_{abc}).
\end{equation}
By definition, every subpath of an optimal path must also be optimal~\citep{cormen2022introduction}; 
hence, (\ref{eqn:thm1-cost-compare}) contradicts the optimality of $\pi^*$.
Therefore, the assumption in~(\ref{eqn:thm1-not-inclusion}) is false, and $\pi^* \subseteq \ginfset$.
\end{proof}

While Theorem~\ref{theorem:optimality} establishes that the optimal path $\pi^*$ lies within the greedy informed set $\ginfset$ when it shares the same homotopy class as the current solution path $\pi$, this guarantee no longer holds when the two paths are homotopically distinct.
In such cases, $\ginfset$ may fail to include $\pi^*$, motivating an examination of scenarios where this exclusion occurs.

\begin{remark}[Non-inclusion of the optimal path in the greedy informed set]
\label{remark:non_optimality}
The greedy informed set $\ginfset$ does not necessarily contain the optimal path $\pi^*$ if the current solution path and the optimal path lie in different homotopy classes.
\end{remark}

Remark~\ref{remark:non_optimality} follows directly from Theorem~\ref{theorem:optimality} and can be illustrated with a simple counterexample.
Consider a planning problem in a maze-like environment with many homotopy classes between the initial and goal states (Figure~\ref{fig:analysis}).
Let $\pi$ be a feasible path that passes through a narrow corridor and requires several sharp turns to navigate around obstacles.
In contrast, let $\pi^*$ denote an optimal path in a different homotopy class that avoids the corridor by circumventing the obstacles entirely.
By construction, the greedy informed set $\ginfset$ is concentrated around the current solution path $\pi$, primarily near the narrow passage.

Since $\pi$ and $\pi^*$ belong to different homotopy classes, the path $\pi$ cannot be continuously deformed into $\pi^*$ without intersecting obstacles.
Theorem~\ref{theorem:optimality} establishes that if $\pi_i$ and $\pi^*$ are homotopic, then $\pi^*$ lies in $\ginfset$.
However, in this case, $\ginfset$ is restricted to the vicinity of the narrow passage and therefore does not contain $\pi^*$, which circumvents the obstacles.
Thus, in environments of this type, where the current feasible path and the optimal path belong to different homotopy classes, the greedy informed set $\ginfset$ may fail to include the optimal path.

\begin{figure}[!tb] 
\centering
\includegraphics[width=0.8\linewidth]{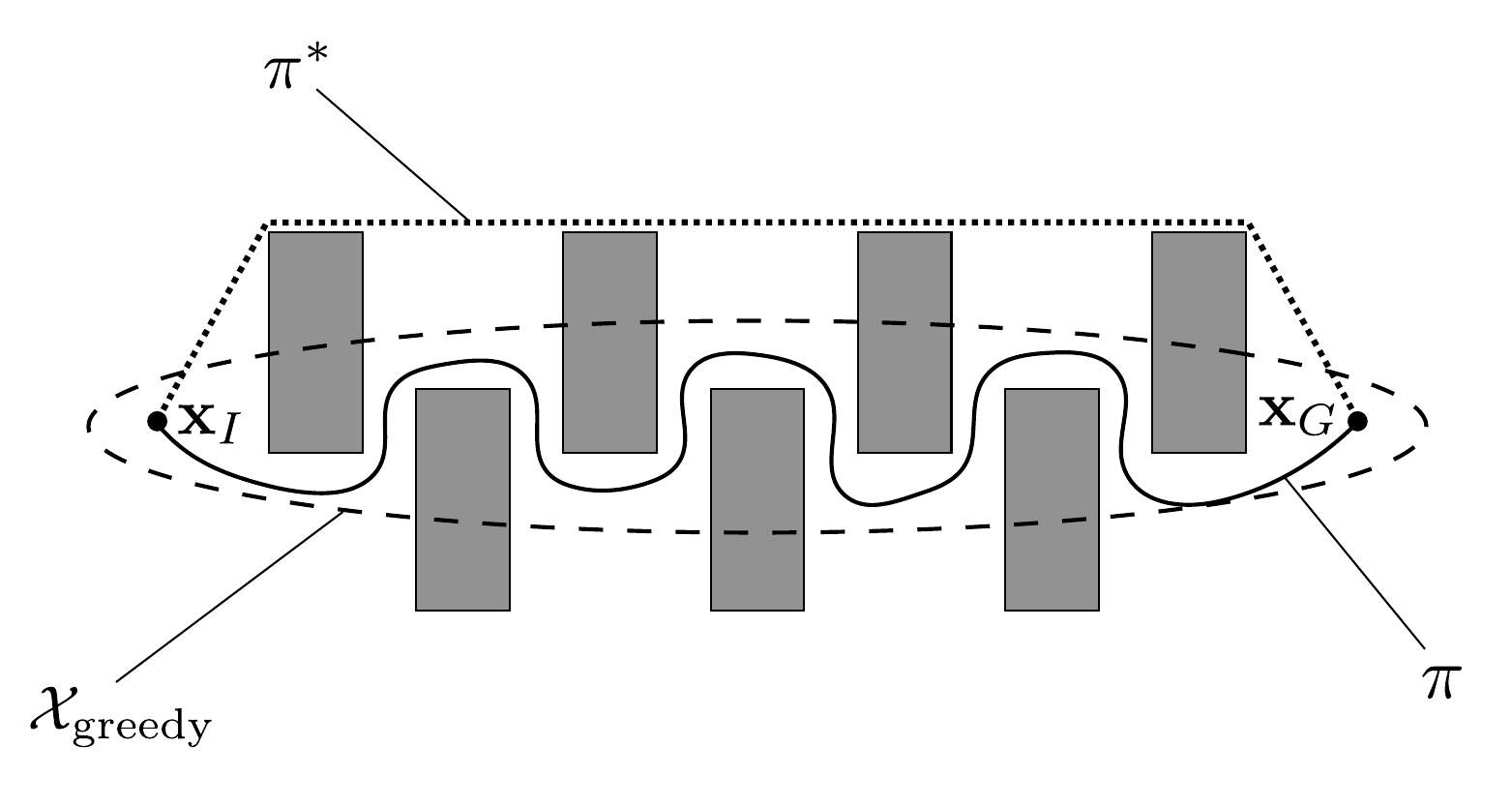}
\caption{
Illustration of the suboptimality of the greedy informed set in an example planning scenario.
The greedy informed set $\ginfset$, constructed from the current tortuous solution path $\pi$, is shown as a dashed ellipse and the optimal path $\pi^{*}$ as a dotted line.
In this case, $\ginfset$ fails to include some states that could improve the current solution cost (i.e., those leading towards $\pi^{*}$ and lying outside of the hyperellipsoid) due to the nature of its greedy exploitation.
}
\label{fig:analysis}
\end{figure}

This counterexample highlights an important limitation of relying solely on $\ginfset$.
When the optimal path lies in a different homotopy class from the current solution path, sampling exclusively from $\ginfset$ can lead to suboptimal performance.
Understanding this behaviour is crucial when choosing how frequently the planner should sample from $\ginfset$, as the impact of this choice depends on the distribution of homotopy classes and the type of planning problem being addressed.
A practical way to mitigate this issue is to also sample from the informed set with some probability (Section~\ref{sec:grrt_star}), thereby balancing exploration and exploitation within the planner.

\section{Greedy RRT* (G-RRT*)}
\label{sec:grrt_star}

\begin{figure*}[!t]
   \centering
   \begin{subfigure}[b]{0.24\textwidth}
       \centering
       \includegraphics[width=\textwidth]{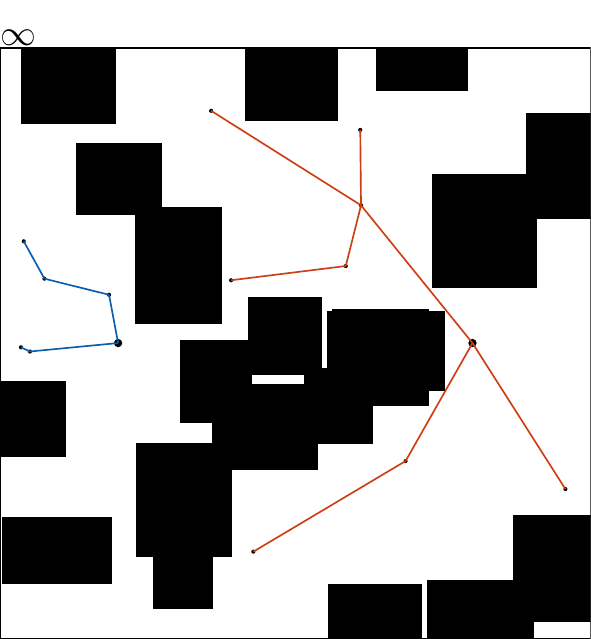}
       \captionsetup{justification=centering}
       \caption{}
       \label{subfig:}
   \end{subfigure}
   \begin{subfigure}[b]{0.24\textwidth}
       \centering
       \includegraphics[width=\textwidth]{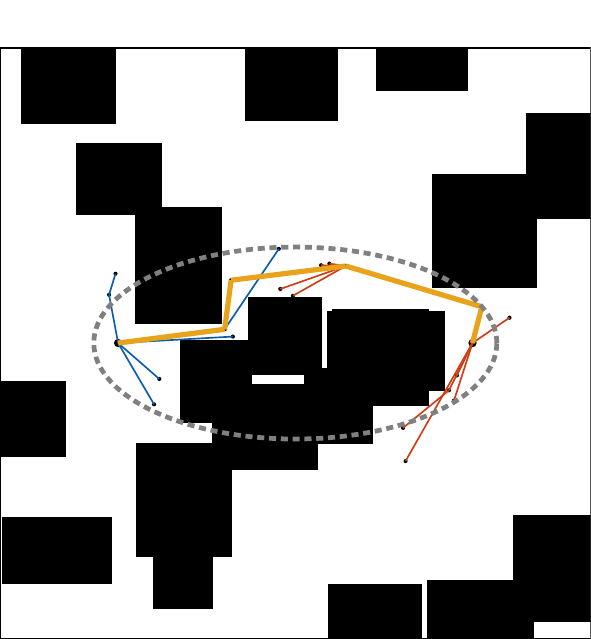}
       \captionsetup{justification=centering}
       \caption{}
       \label{subfig:}
   \end{subfigure}
   \begin{subfigure}[b]{0.24\textwidth}
       \centering
       \includegraphics[width=\textwidth]{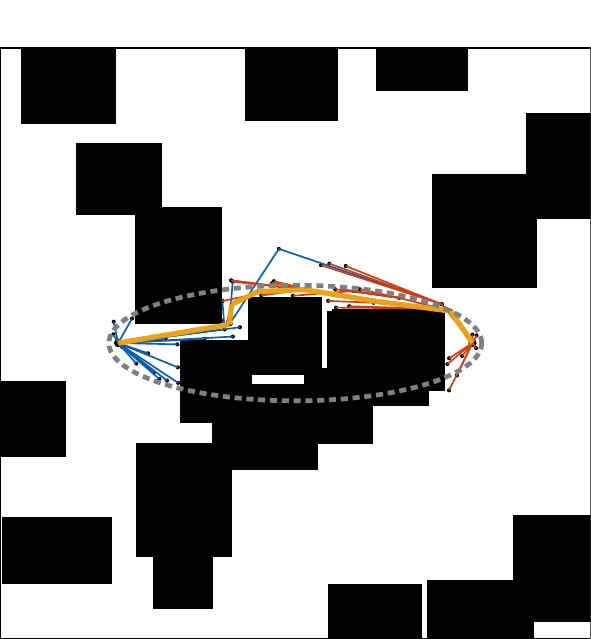}
       \captionsetup{justification=centering}
       \caption{}
       \label{subfig:}
   \end{subfigure}
   \begin{subfigure}[b]{0.24\textwidth}
       \centering
       \includegraphics[width=\textwidth]{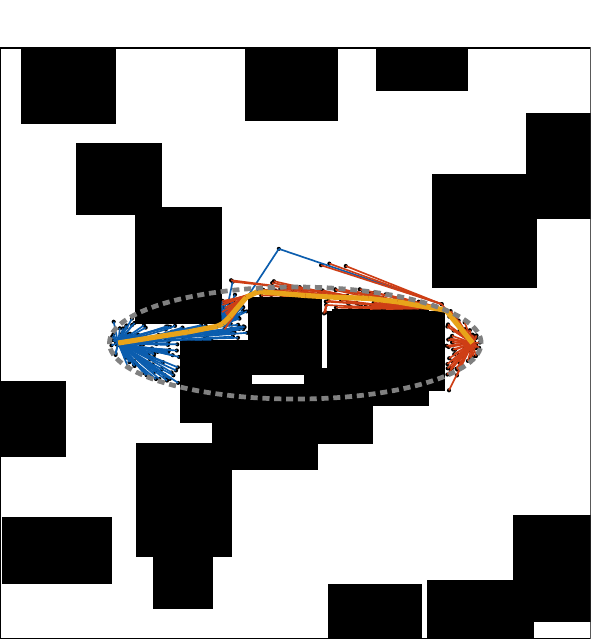}
       \captionsetup{justification=centering}
       \caption{}
       \label{subfig:}
   \end{subfigure}
\caption{Illustrations of the progress of bi-directional sampling-based search performed by the G-RRT* algorithm.
The initial and goal states are shown as large black dots, sampled states as small black dots, and the start and goal trees in blue and orange, respectively.
The current solution path is highlighted in yellow, and the $L^2$ greedy informed set---the set of states that could yield better solutions---is shown with gray dashed lines.
G-RRT* grows two trees rooted at the start and goal (a), where expansion is guided by a greedy connection heuristic that guides the two trees towards each other, producing an initial solution (b).
Subsequent sampling is then focused within the greedy informed set to incrementally refine the path (c--d), almost-surely asymptotically converging to the optimal solution.
}
\label{fig:g_rrtstar_progress}
\vspace{-2mm}
\end{figure*}

Greedy RRT* is an almost surely asymptotically optimal path planning algorithm that builds on RRT* and its bi-directional variants~\citep{klemm2015rrt, mashayekhi2020informed}.
It maintains two rapidly growing trees---one rooted at the start and the other at the goal---to explore the state space, and employs a greedy connection heuristic to guide them toward each other, similar to RRT-Connect~\citep{kuffner2000rrt}.
As an RRT*-like algorithm, it incrementally rewires the growing trees to preserve asymptotic optimality.
However, rather than sampling randomly throughout the state space, G-RRT* focuses the search on promising regions, once an initial solution is found.
Specifically, it employs a greedy version of the informed set introduced in Section~\ref{sec:greedy_informed_set} to exploit knowledge of the existing solution path and focus subsequent sampling (Figure~\ref{fig:g_rrtstar_progress}).
The complete algorithm is presented in Algorithm blocks~\ref{algo:greedy_rrtstar}--\ref{algo:connect_star}, with modifications to the bi-directional versions of RRT and RRT* highlighted in red.

\begin{algorithm}[!t]
\caption{Greedy RRT* $(\mathbf{x}_I, \mathbf{x}_G)$}\label{algo:greedy_rrtstar}
$V_a \gets \{ \mathbf{x}_I \}; E_a \gets \emptyset; \mathcal{T}_a=(V_a,E_a)$\;
$V_b \gets \{ \mathbf{x}_G \}; E_b \gets \emptyset; \mathcal{T}_b=(V_b,E_b)$\;
$E_{\text{sol'n}} \gets \emptyset$\;
\For {$i=1, \dots, n$}{
{\color{red(ncs)}$c_{\text{i}} \gets \text{ComputeBestCost}\,(E_{\text{sol'n}})$\label{algo1:line5}\;}
$\mathbf{x}_{\text{rand}} \gets \text{Sample}\,(\mathbf{x}_I,\mathbf{x}_G,c_{\text{i}})$\label{algo1:line6}\;
$\text{status}, \mathbf{x}_{\text{a}} \gets \text{Extend*}\,(\mathcal{T}_a=(V_a, E_a), \mathbf{x}_{\text{rand}})$\label{algo1:line7}\;
    
\If{$\textup{status} \neq \textup{TRAPPED}$\label{algo1:line8}}{$\text{status}, \mathbf{x}_{\text{b}} \gets \textup{Connect*}\,(\mathcal{T}_b=(V_b,E_b), \mathbf{x}_{\text{a}})$\label{algo1:line9}\;
        \If{$\textup{status} = \textup{REACHED}$\label{algo1:line10}}{
            $E_{\text{sol'n}} \gets E_{\text{sol'n}} \cup \{ \mathbf{x}_{\text{a}}, \mathbf{x}_{\text{b}} \}$\label{algo1:line11}\;
        }
    }
    
    $\textup{Swap}\,(\mathcal{T}_a=(V_a,E_a), \mathcal{T}_b=(V_b,E_b))$\label{algo1:line12}\;
}
\Return{$\mathcal{T}=(V_a \cup V_b, E_a \cup E_b)$}\;
\end{algorithm}

\begin{algorithm}[!t]
\caption{ComputeBestCost\\$(E_{\text{sol'n}} = \{(\mathbf{x}_a,\mathbf{x}_b) \;|\; \mathbf{x}_a \in V_a, \; \mathbf{x}_b \in V_b \} )$}\label{algo:compute_best_cost}
{\color{red(ncs)}
   $c_{\text{i}} \gets \infty$\label{algo2:line1}\;

   \If{$|E_{\text{sol'n}}| > 0$\label{algo2:line2}}{

    $c_{\text{i}} \gets \underset{(\mathbf{x}_a,\mathbf{x}_b) \in E_{\text{sol'n}}}{\operatorname{min}}
    \left\{ g_a(\mathbf{x}_a) + c(\mathbf{x}_a,\mathbf{x}_b) + g_b(\mathbf{x}_b) \right\}$\label{algo2:line3}\;

    \If{$\epsilon > \mathcal{U}([0,1])$\label{algo2:line4}}{

        \If{$c_{\mathrm{i}} < c_{\mathrm{best}}$\label{algo2:line5}}{
            $c_{\mathrm{best}} \gets c_{\text{i}}$\label{algo2:line6}\;
            $c_{\mathrm{max}} \gets
            \underset{
                \substack{
                    (\mathbf{x}_a,\mathbf{x}_b)\in E_{\text{sol'n}}\\
                    \mathbf{x}\in\{\mathbf{x}_a,\mathbf{x}_b\}\\
                    k\ge 0
                }}{\operatorname{max}} \Hat{f}\left(\mathrm{Parent}^k(\mathbf{x})\right)$\;
        }
        \Return $c_\mathrm{max}$\;
    }
}
\Return $c_{\mathrm{i}}$\;
}
\end{algorithm}

\begin{algorithm}[!t]
\caption{Sample $(\mathbf{x}_I, \mathbf{x}_G, c_{\text{max}})$}\label{algo:sample}
\Repeat{$\mathbf{x}_{\text{rand}}$ satisfies bounds}{\label{algo3:line1}
    \eIf{$c_{\text{max}} < \infty$\label{algo3:line2}}{
        $\mathbf{x}_{\text{rand}} \gets \text{SampleHyperEllipsoid}\,(\mathbf{x}_I, \mathbf{x}_G, c_{\text{max}})$\;
        \If{$\mathbf{x}_{\text{rand}} \in \mathcal{X} \cap \mathcal{X}_{\text{PHS}}$}{
            \Return{$\mathbf{x}_{\text{rand}}$}\label{algo3:line5}\;
        }
    }{\label{algo3:line6}
        $\mathbf{x}_{\text{rand}} \gets \text{SampleUniform}\,(\mathcal{X})$\;
        \Return{$\mathbf{x}_{\text{rand}}$}\label{algo3:line8}\;
    }
}
\end{algorithm}

\begin{algorithm}[!hptb]
\caption{Extend* $(\mathcal{T}=(V,E), \mathbf{x})$}\label{algo:extend_star}

    $\mathbf{x}_{\text{nearest}} \gets \text{Nearest}\,(\mathcal{T}=(V,E), \mathbf{x})$\label{algo4:line1}\;
    $\mathbf{x}_{\text{new}} \gets \text{Steer}\,(\mathbf{x}_{\text{nearest}}, \mathbf{x})$\label{algo4:line2}\;

    \If{\color{red(ncs)}$g(\mathbf{x}_{\text{nearest}})\! +\! c(\mathbf{x}_{\text{nearest}}, \mathbf{x}_{\text{new}})\! +\! \Hat{h}(\mathbf{x}_{\text{new}}) < c_{i}$}
    {
    \If {$\text{ObstacleFree}\,(\mathbf{x}_{\text{nearest}}, \mathbf{x}_{\text{new}})$\label{algo4:line3}}{
        $V \gets V \cup \{ \mathbf{x}_{\text{new}} \}$\label{algo4:line4}\;
        $X_{\text{near}} \gets \text{Near}\,(\mathcal{T}=(V,E), \mathbf{x}_{\text{new}}, r_{\text{rewire}})$\label{algo4:line5}\;
        $\mathbf{x}_{\text{min}} \gets \mathbf{x}_{\text{nearest}}$\label{algo4:line7}\;
        \ForEach{$\mathbf{x}_{\text{near}} \in X_{\text{near}}$}{
            $c_{\text{near}} \gets g(\mathbf{x}_{\text{near}}) + c(\mathbf{x}_{\text{near}}, \mathbf{x}_{\text{new}})$\;
            \If{$c_{\text{near}} < g(\mathbf{x}_{\text{min}}) + c(\mathbf{x}_{\text{min}}, \mathbf{x}_{\text{new}})$}{
                \If{$\text{ObstacleFree}\,(\mathbf{x}_{\text{near}}, \mathbf{x}_{\text{new}})$}{
                    $\mathbf{x}_{\text{min}} \gets \mathbf{x}_{\text{near}}$\;
                }
            }
        }
        $E \gets E \cup \{ \mathbf{x}_{\text{min}}, \mathbf{x}_{\text{new}} \}$\label{algo4:line13}\;
        \ForEach{$\mathbf{x}_{\text{near}} \in X_{\text{near}}$\label{algo4:line14}}{
            \If{$g(\mathbf{x}_{\text{new}}) + c(\mathbf{x}_{\text{new}}, \mathbf{x}_{\text{near}}) < g(\mathbf{x}_{\text{near}})$}{
                \If{$\text{ObstacleFree}\,(\mathbf{x}_{\text{new}}, \mathbf{x}_{\text{near}})$}{
                    $\mathbf{x}_{\text{parent}} \gets \text{Parent}\,(\mathbf{x}_{\text{near}})$\;
                    $E \gets E \setminus \{(\mathbf{x}_{\text{parent}},\mathbf{x}_{\text{near}})\}$\;
                    $E \gets E \cup \{(\mathbf{x}_{\text{new}},\mathbf{x}_{\text{near}})\}$\label{algo4:line19}\;
                }
            }
        }
        \eIf{$\mathbf{x}_{\text{new}} = \mathbf{x}$\label{algo4:line20}}
        {
            \Return{$\textup{REACHED}, \mathbf{x}_{\text{new}}$}\label{algo4:line21}\;
        }
        {\label{algo4:line22}
            \Return{$\textup{ADVANCED}, \mathbf{x}_{\text{new}}$}\label{algo4:line23}\;
        }
    }
    }
    \Return{$\textup{TRAPPED}, \mathbf{x}_{\text{new}}$}\label{algo4:line24}\;
\end{algorithm}

\begin{algorithm}[!hptb]
\caption{Connect* $(\mathcal{T}=(V,E), \mathbf{x})$}\label{algo:connect_star}

\Repeat{$\textup{status} \neq \textup{ADVANCED}$}
{
    $\text{status}, \mathbf{x}_{\text{new}} \gets \text{Extend*}\,(\mathcal{T}=(V,E), \mathbf{x})$\;
}

\end{algorithm}

\subsection{Greedy Informed Sampling}
\label{sec:greedy_informed_sampling}

G-RRT* finds better paths by simultaneously growing two trees, $\mathcal{T}_a = (V_a, E_a)$ from the start and $\mathcal{T}_b = (V_b, E_b)$ from the goal, consisting of the vertices $V_a \cup V_b$ and edges $E_a \cup E_b$, each expanding toward randomly sampled states in the free space.
Each tree incrementally rewires nearby vertices to minimize their cost-to-come.
Once an initial solution is found, G-RRT* exploits heuristic information from the current solution to bias sampling toward a progressively shrinking hyperellipsoidal region called the greedy informed set.
It also samples from the informed subset to maintain exploration and ensure asymptotic optimality.
Specifically, the balance between exploration and exploitation is controlled by a parameter $\epsilon \in [0,1]$, called the \emph{greedy biasing ratio}.
With probability $\epsilon$, the algorithm samples from the greedy informed set to exploit the current best path; with probability $1 - \epsilon$, it samples from the broader informed subset to encourage exploration and retain asymptotic optimality.
Thus, finding a good balance between uniform sampling for exploration and path-biased sampling for exploitation is necessary for G-RRT* to rapidly reduce the search space and achieve faster convergence toward globally optimal solutions.

\edit{
The cost bound that defines this sampling region in each iteration is computed by Algorithm~\ref{algo:compute_best_cost}.
When no solution has been found, the procedure returns an infinite cost, and G-RRT* samples uniformly over the state space.
Once the two trees are connected, it finds the lowest-cost connection in the solution connection set $E_{\mathrm{sol'n}}$ and sets $c_{\mathrm{i}}$ to the corresponding solution cost.
Following the greedy biasing ratio above, the procedure compares $\epsilon$ against a uniform sample $\mathcal{U}([0,1])$ on the unit interval, returning the greedy cost bound $c_{\mathrm{max}}$ with probability $\epsilon$ and the current solution cost $c_{\mathrm{i}}$ with probability $1-\epsilon$.
The states along the current best path are recovered by following parent pointers from each connecting vertex back to its tree root, where $\mathrm{Parent}^k(\mathbf{x})$ denotes the $k$-th ancestor of $\mathbf{x}$ and $\mathrm{Parent}^0(\mathbf{x}) = \mathbf{x}$.
Since $c_{\mathrm{max}}$ depends only on the current best path, G-RRT* recomputes it only when a lower-cost solution is found and reuses the cached value otherwise, keeping the cost of maintaining the greedy informed set negligible relative to the overall planning time.
G-RRT* adds only a constant-time heuristic gating check before the collision-checking and rewiring operations, so its per-iteration cost matches that of the existing algorithms up to this constant.
}

\subsection{Implementation Details}
\label{sec:implementation_details}

G-RRT* employs a balanced search strategy~\citep{kuffner2005efficient} to keep both trees approximately equal in size while maintaining their rapidly exploring behaviour.
Since collision checking is computationally expensive in practice,  G-RRT* also uses admissible heuristics to gate the vertex extension process (highlighted as red in Algorithm~\ref{algo:extend_star}).
An edge from the nearest vertex to a new vertex is validated only if it can improve the current best solution cost.
This gating step also helps G-RRT* to reduce overlap between the two trees, limiting the number of added vertices.
G-RRT* prunes the trees as in \cite{gammell2018informed}, removing vertices whose heuristic values exceed the greedy best heuristic cost.
Additionally, because obtaining an initial solution as quickly as possible is important,
\edit{
best parent selection and rewiring 
}is delayed until an initial solution is found.
The remaining planning time is then used to rewire to improve solution quality.
\section{Analysis}
\label{sec:analysis}

This section analyzes the theoretical guarantees of G-RRT*, first establishing its probabilistic completeness in Section~\ref{sec:probabilistic_completeness} and then proving asymptotic optimality in Sections \ref{sec:asymptotic_optimality}.

\subsection{Probabilistic Completeness}
\label{sec:probabilistic_completeness}

\edit{
Probabilistic completeness requires that an algorithm find an initial feasible solution, when one exists, as the number of samples tends to infinity.
Since \mbox{G-RRT*} uses the same tree extension and connection strategy as RRT-Connect, its probabilistic completeness follows directly from the arguments established in Theorem~1 of~\citet{kuffner2000rrt}, with a rigorous treatment of the geometric case given by~\cite{kleinbort2018probabilistic}.
While probabilistic completeness can be sensitive to implementation details, the known failure cases involve the vertex extension mechanism under differential constraints \citep{kunz2015kinodynamic} and do not arise in the holonomic geometric setting considered here.
Because G-RRT* modifies RRT-Connect with several components (heuristic gating, greedy informed sampling, delayed rewiring, and tree pruning), we verify that none of them affects the conditions required for probabilistic completeness.

Each of these components remains inactive until an initial solution has been found.
Before the first solution, the best solution cost is infinite ($c_{\mathrm{i}} = \infty$ in Algorithm~\ref{algo:compute_best_cost}), so the gating condition in Algorithm~\ref{algo:extend_star} holds trivially, sampling defaults to a uniform distribution over $\mathcal{X}$ (Algorithm~\ref{algo:sample}), and neither pruning nor rewiring is enabled.
While searching for the initial solution, G-RRT* therefore performs the same extension and connection operations as RRT-Connect.
Since uniform sampling assigns positive probability to every region of $\set{\mathrm{free}}$ with positive measure and the geometric extension step is unchanged, the vertex sets $V_a$ and $V_b$ become dense in $\set{\mathrm{free}}$ as the number of iterations $k \to \infty$.
Because the greedy connection heuristic generates all the standard RRT vertices, along with additional ones, it aids in covering $\set{\mathrm{free}}$ and therefore does not affect the completeness guarantee.
G-RRT* is therefore probabilistically complete, and the components above take effect only after an initial solution exists, where they influence the refinement of the solution rather than its discovery (Section 6.2).
}

\subsection{Asymptotic Optimality}
\label{sec:asymptotic_optimality}

G-RRT* uses the greedy informed set $\ginfset$, introduced in Section~\ref{sec:greedy_informed_set}, to focus the search on a subset of the state space.
This subset is bounded above by the informed subset, that is, $\ginfset \subseteq \infset$.
Focusing the search in this way increases the likelihood of sampling states that lie within the omniscent set.
However, not all states in $\ginfset$ are guaranteed to be in the omniscient set (see Figure~\ref{fig:analysis}).
As a result, some states that could yield better solutions may lie outside the homotopy regions covered by $\ginfset$ (Remark~\ref{remark:non_optimality}).
Nevertheless, because G-RRT* samples from both $\ginfset$ and $\infset$ according to the greedy biasing ratio $\epsilon$ (Section~\ref{sec:greedy_informed_sampling}), setting $\epsilon < 1$ is a sufficient condition for preserving asymptotic optimality.

\begin{theorem}[Worst-case sample complexity for probabilistic optimality]
\label{theorem:worst-case}
When sampling with a greedy biasing ratio $\epsilon$, the worst-case number of samples required to achieve probabilistic optimality increases by a factor of $\frac{1}{1-\epsilon}$ relative to sampling only from the informed set.
\end{theorem}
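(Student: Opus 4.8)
The plan is to model the sampling procedure as a sequence of independent Bernoulli trials and to reduce the statement to a waiting-time (negative binomial) calculation. At each iteration, G-RRT* draws a sample from the greedy informed set $\ginfset$ with probability $\epsilon$ and from the broader informed set $\infset$ with probability $1-\epsilon$, and these choices are independent across iterations. First I would fix the worst-case instance: by Remark~\ref{remark:non_optimality}, there exist problems in which the current solution path $\pi$ and the optimal path $\pi^*$ lie in distinct homotopy classes, so that $\pi^* \not\subseteq \ginfset$. In any such instance, no sample drawn from $\ginfset$ can lie on, or rewire toward, the optimal path, so greedy samples contribute nothing to closing the gap between the current and optimal solution costs.

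The core step is then to count only the useful draws, namely those taken from $\infset$. I would invoke the asymptotic optimality results underlying RRT* and Informed RRT* to assert that there exists a finite number $N$ of informed-set samples sufficient to achieve probabilistic optimality, to within any fixed tolerance and confidence, when sampling exclusively from $\infset$; this is precisely the $\epsilon = 0$ baseline against which the theorem compares. Because each iteration independently yields an informed-set sample with probability $1-\epsilon$, the number of iterations required to accumulate $N$ such samples is a sum of $N$ i.i.d. geometric random variables with success probability $1-\epsilon$, i.e., a negative binomial variable whose expectation is $N/(1-\epsilon)$. Equivalently, among $M$ total draws the expected count of informed-set samples is $M(1-\epsilon)$, and equating this to $N$ yields $M = N/(1-\epsilon)$, exactly the claimed factor of $\tfrac{1}{1-\epsilon}$.

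The hard part will be justifying that, in the worst case, the greedy samples are genuinely wasted rather than merely less efficient; this is what makes the bound tight, and it relies on the homotopy-class separation established in Remark~\ref{remark:non_optimality}. I would also need to be careful about what \emph{probabilistic optimality} quantifies, tying $N$ to the sample-density conditions of the existing convergence theorems so that the reduction to counting informed-set samples is rigorous and the independence of the Bernoulli selection is not broken by the rewiring dynamics.
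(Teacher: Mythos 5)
Your proposal is correct and follows essentially the same route as the paper: in the worst case (Remark~\ref{remark:non_optimality}) greedy draws are wasted, only the fraction $1-\epsilon$ of iterations that sample $\infset$ count toward optimality, and the expected waiting time therefore inflates by $\tfrac{1}{1-\epsilon}$. The paper phrases this as the geometric expectation $E[n]=1/P(\xrand\in\infset)$ for a single useful sample and takes the ratio of expectations, while you phrase it as a negative-binomial waiting time for $N$ useful samples; the two are the same calculation, and your explicit identification of the worst-case instance and the independence caveats only makes the argument tighter.
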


\begin{proof}
Let $P(\mathbf{x}_\mathrm{rand} \in \set{f})$ represent the probability of sampling a state from the omniscient set. According to Lemma 5 from \cite{gammell2018informed}, sampling states from the omniscient set is a necessary condition for improving the current solution in RRT*-like algorithms. Let $P(\mathbf{x}_\mathrm{rand} \in \infset)$ represent the probability of sampling a state from the informed set.
Since the informed set $\smash{\infset}$ is a superset of the omniscient set $\smash{\set{f}}$ \citep{gammell2018informed}, sampling from $\smash{\infset}$ is also a necessary condition for probabilistic optimality, and the probability of doing so is bounded below by the probability of sampling from $\set{f}$:
\begin{equation*}
P(\mathbf{x}_\mathrm{rand} \in \set{f}) \leq P(\mathbf{x}_\mathrm{rand} \in \infset)   
\end{equation*}
Thus, the expected number of samples required to improve the current solution to a holonomic problem depends on the probability of sampling the informed set, and is given by:
\begin{equation}
\label{eqn:thm3-expectation}
    E[n] = \frac{1}{P(\mathbf{x}_\mathrm{rand} \in \infset)}
\end{equation}
Recall that with probability $1 - \epsilon$, we sample from the informed set, and with probability $\epsilon$, we sample from the greedy informed set. Therefore, the overall probability of sampling from $\infset$ in any given iteration is reduced, since we do not always sample from $\infset$. Specifically, the new probability $P_\mathrm{new}(\mathbf{x}_\mathrm{rand} \in \infset)$ of selecting a random sample from $\infset$ is:
\begin{equation}
\label{eqn:thm3-pnew}
P_\mathrm{new}(\mathbf{x}_\mathrm{rand} \in \infset) = (1 - \epsilon) \cdot P(\mathbf{x}_\mathrm{rand} \in \infset)
\end{equation}
Substituting (\ref{eqn:thm3-pnew}) into (\ref{eqn:thm3-expectation}), the expected number of samples required to sample from $\infset$ now becomes:
\begin{equation}
\label{eqn:thm3-expectation-new}
\begin{aligned}
    E[n_\mathrm{new}] &= \frac{1}{(1 - \epsilon) \cdot P(\mathbf{x}_\mathrm{rand} \in \infset)}  
\end{aligned}
\end{equation}
Comparing (\ref{eqn:thm3-expectation}) and (\ref{eqn:thm3-expectation-new}):
\begin{equation}
\label{eqn:thm3-expectation-comparison}
\dfrac{E[n_\mathrm{new}]}{E[n]} = \dfrac{\dfrac{1}{(1 - \epsilon) \cdot P(\mathbf{x}_\mathrm{rand} \in \infset)}}{\dfrac{1}{P(\mathbf{x}_\mathrm{rand} \in \infset)}} = \dfrac{1}{1-\epsilon}  
\end{equation}
Therefore, in the worst case where the greedy informed set does not include states from the optimal solution, the number of samples required for asymptotic optimality increases by a factor of $ \frac{1}{1 - \epsilon} $ compared to only sampling from $ \infset $.
\end{proof}

\begin{theorem}[Expected sample complexity under mixed greedy sampling]
\label{theorem:average-case}
Let the planner sample from the informed set with probability $1 - \epsilon$ and from the greedy informed set with probability $\epsilon$.
Let $\gamma \in [0,1]$ denote the fraction of the omniscient set that is contained within the greedy informed set (i.e., the recall of $\smash{\ginfset}$ with respect to $\smash{\set{f}}$), and let $\rho$ denote the ratio of the volume of the \edit{greedy informed set} to the volume of the informed set.
Then the expected number of samples required for probabilistic optimality satisfies
\begin{equation}
\frac{E[n_\mathrm{new}]}{E[n]} = \dfrac{1}{\,1 - \epsilon + \epsilon\,\gamma/\rho\,}.
\end{equation}
In particular, $E[n_\mathrm{new}] < E[n]$ if and only if $\gamma > \rho$.
\end{theorem}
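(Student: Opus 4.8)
The plan is to generalize the single-success-probability (geometric) argument used in the proof of Theorem~\ref{theorem:worst-case}. As there, I would model the search for an improving sample as a sequence of independent Bernoulli trials in which a trial ``succeeds'' precisely when the drawn state lands in the omniscient set $\set{f}$---the necessary condition for improvement by Lemma~5 of \citet{gammell2018informed}. Under this model the expected number of samples needed for probabilistic optimality is the reciprocal of the per-sample success probability, so I would write $E[n] = 1/p$ for the informed-only baseline and $E[n_\mathrm{new}] = 1/p_\mathrm{mix}$ for the mixed strategy. The theorem then reduces to computing these two probabilities and forming their ratio.

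First I would fix the baseline. Sampling uniformly from the informed set $\infset$ hits $\set{f}$ with probability equal to the relevant volume ratio, which is exactly $\rho$ by hypothesis; hence $p = \rho$ and $E[n] = 1/\rho$. Next I would decompose the mixed strategy by the law of total probability over the two branches selected by the greedy biasing ratio $\epsilon$: with probability $1-\epsilon$ the planner draws from $\infset$, contributing success probability $\rho$, and with probability $\epsilon$ it draws from the greedy set $\ginfset$. Since the greedy branch can only yield improving states from the portion of $\set{f}$ that $\ginfset$ captures, I would identify its per-sample success probability with the recall $\gamma$, giving $p_\mathrm{mix} = (1-\epsilon)\rho + \epsilon\gamma$. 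Dividing through by $\rho$ then yields $E[n_\mathrm{new}]/E[n] = p/p_\mathrm{mix} = 1/\bigl(1-\epsilon+\epsilon\gamma/\rho\bigr)$, which recovers the worst-case factor $1/(1-\epsilon)$ of Theorem~\ref{theorem:worst-case} in the limit $\gamma \to 0$.

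Finally, the sign characterization follows by inspection of the closed form: the ratio is strictly below one exactly when the denominator exceeds one, i.e.\ when $\epsilon\gamma/\rho > \epsilon$, which (since $\epsilon > 0$) holds if and only if $\gamma > \rho$. I expect the main obstacle to be the second step---placing the greedy branch's success probability on the same footing as the informed branch. The subtlety is one of normalization: $\rho$ is a precision-type quantity (the fraction of the \emph{informed} set that is useful), whereas $\gamma$ is a recall-type quantity (the fraction of the \emph{useful} set that is captured). I would therefore state explicitly the assumption under which the greedy branch contributes success probability $\gamma$---essentially that $\ginfset$ concentrates its sampling mass on the captured omniscient volume---and emphasize that the resulting comparison ``greedy helps iff recall $\gamma$ exceeds informed precision $\rho$'' is exactly the interpretable criterion the closed form predicts.
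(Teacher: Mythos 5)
Your overall strategy---a geometric expected-sample argument with the per-iteration success probability decomposed by the law of total probability over the two sampling branches---is exactly the paper's, and your baseline and final algebra (including the $\gamma>\rho$ characterization) line up. The gap is in the step you yourself flag as the obstacle: under the uniform sampling the planner actually performs, the greedy branch's per-sample success probability is the \emph{precision} of the greedy set, $\lambda(\set{f}\cap\ginfset)/\lambda(\ginfset)$, not its recall $\gamma=\lambda(\set{f}\cap\ginfset)/\lambda(\set{f})$; the two coincide only when $\lambda(\ginfset)=\lambda(\set{f})$. The auxiliary assumption you propose in order to close this (``$\ginfset$ concentrates its sampling mass on the captured omniscient volume'') amounts to asserting that the greedy set has precision one, which is false in general (Figure~\ref{fig:analysis} is essentially a counterexample) and, more importantly, unnecessary.

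The paper avoids the issue by computing the greedy-branch probability exactly as $\lambda(\set{f}\cap\ginfset)/\lambda(\ginfset)=(\gamma/\rho)\cdot\lambda(\set{f})/\lambda(\infset)$, where---and this is the other thing to notice---the proof's $\rho$ is $\lambda(\ginfset)/\lambda(\infset)$, the greedy-to-informed volume ratio, \emph{not} the omniscient-to-informed ratio given in the theorem statement (a statement/proof mismatch that likely misled you). With that $\rho$, one gets $P_\mathrm{new}=\left[(1-\epsilon)+\epsilon\gamma/\rho\right]\cdot\lambda(\set{f})/\lambda(\infset)$ with no extra hypothesis, and the condition $\gamma>\rho$ is equivalent to the greedy set having higher precision than the informed set, i.e., $\lambda(\set{f}\cap\ginfset)/\lambda(\ginfset)>\lambda(\set{f})/\lambda(\infset)$. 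If you instead keep the statement's definition of $\rho$ and compute the greedy-branch probability correctly, your derivation yields $1/\left[(1-\epsilon)+\epsilon\gamma\,\lambda(\infset)/\lambda(\ginfset)\right]$ rather than the claimed formula, so the identification of the greedy branch's success probability with $\gamma$ is doing illegitimate work in your argument.
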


\begin{proof}
Let $\lambda$ denote the Lebesgue measure on $\mathbb{R}^n$.
Under uniform sampling of the greedy informed set, and following Definition~9 in~\cite{gammell2018informed}, we define $\gamma$ as the \emph{recall} of $\ginfset$ with respect to the omniscient set:
\begin{equation}
\label{eqn:avg-case-recall}
\gamma = \mathrm{Recall}\left( \ginfset \right) = \frac{\lambda(\set_{f} \cap \ginfset)}{\lambda(\set_{f})} \in \lbrack 0, 1 \rbrack.
\end{equation}
Similarly, let $\rho$ denote the ratio of the Lebesgue measures of the greedy informed set and the informed set:
\begin{equation}
\rho =
\frac{\lambda(\ginfset)}
     {\lambda(\infset)} \in (0,1].
\end{equation}
We can bound the Lebesgue measure of $\smash{\infset}$ by the measure of a prolate hyperspheroid as
\begin{equation}
\label{eqn:informed-measure}
\lambda\bigl(\infset\bigr) \leq \lambda\bigl(\set{\mathrm{PHS}}\bigr)
=
\frac{\zeta_{n}}{2^{\,n}}\;
c_i\bigl(c_i^{2}-c_{\mathrm{min}}^{2}\bigr)^{\frac{n-1}{2}}.
\end{equation}
Here $\zeta_{n}$ is the Lebesgue measure of the unit $n$-ball,
\begin{equation}
\zeta_n=\frac{\pi^{n/2}}{\Gamma\!\left(\tfrac{n}{2}+1\right)}.
\end{equation}
Similarly, the Lebesgue measure of $\ginfset$ is bounded by
\begin{equation}
\label{eqn:greedy-measure}
\begin{split}
\lambda\bigl(\ginfset\bigr) &\leq \lambda\bigl(\set{\mathrm{PHS}}\bigr)
\\&=
\frac{\zeta_{n}}{2^{\,n}}\;
\Hat{f}(\Vector{x}_{\mathrm{max}})\bigl({\Hat{f}(\Vector{x}_{\mathrm{max}})}^{2}-c_{\mathrm{min}}^{2}\bigr)^{\frac{n-1}{2}}   
\end{split}
\end{equation}
Using (\ref{eqn:informed-measure}) and (\ref{eqn:greedy-measure}), $\rho$ admits the closed form
\begin{equation}
\rho
=
\frac{\Hat{f}(\Vector{x}_{\mathrm{max}})}{c_i}\;
\left(
\frac{\Hat{f}(\Vector{x}_{\mathrm{max}})^{2}-c_{\mathrm{min}}^{2}}
     {c_i^{2}-c_{\mathrm{min}}^{2}}
\right)^{\!\frac{n-1}{2}}.
\end{equation}
Let $P(\xrand \in \set{f})$ denote the probability of sampling a state from the omniscient set. According to Lemma~5 from \cite{gammell2018informed}, sampling states from the omniscient set is a necessary condition for improving the current solution in RRT*-like algorithms.
Recall that with probability $\epsilon$, we sample from the greedy informed set, and with probability $1 - \epsilon$, we sample from the informed set.
Therefore, the new probability $P_\mathrm{new}$ of selecting a random sample from either set that can guarantee improvement is:
\begin{equation}
\begin{aligned}
P_\mathrm{new} = \epsilon \cdot P (\xrand \in \set_{f} \mid \xrand \sim \mathcal{U}(\ginfset)) \\+ (1 - \epsilon) \cdot P (\xrand \in \set_{f} \mid \xrand \sim \mathcal{U}(\infset))
\end{aligned}
\end{equation}
Under uniform sampling, each conditional probability can be expressed as a ratio of relative measures. Specifically,
\begin{align}
\label{eqn:avg-case-prob-ratios}
P_\mathrm{new} = \epsilon \cdot \frac{\lambda(\set_{f} \cap \ginfset)}{\lambda(\ginfset)} + (1 - \epsilon) \cdot \frac{\lambda(\set_{f} \cap \infset)}{\lambda(\infset)}
\end{align}
Since $\infset \supseteq \set{f}$, as noted in \cite{gammell2018informed}, (\ref{eqn:avg-case-prob-ratios}) simplifies to:
\begin{align}
\label{eqn:avg-case-prob-ratios-simplified}
P_\mathrm{new} = \epsilon \cdot \frac{\lambda(\set_{f} \cap \ginfset)}{\lambda(\ginfset)} + (1 - \epsilon) \cdot \frac{\lambda(\set_{f})}{\lambda(\infset)}
\end{align}
Expressing (\ref{eqn:avg-case-prob-ratios-simplified}) in terms of $\gamma$ and $\rho$ gives:
\begin{equation}
\begin{aligned}
P_\mathrm{new}
&= \epsilon \cdot \left( \gamma / \rho \frac{\lambda(\set_{f})}{\lambda(\infset)} \right) + (1 - \epsilon) \cdot \frac{\lambda(\set_{f})}{\lambda(\infset)}\\
&= \left\lbrack (1 - \epsilon) + \epsilon (\gamma/\rho) \right\rbrack \cdot \frac{\lambda(\set_{f})}{\lambda(\infset)}
\end{aligned}
\end{equation}
Thus, the expected number of samples required to improve the current solution to a holonomic problem depends on the probability of sampling the informed set as well as on overlap measures between the two sets $\set_{f}$ and $\ginfset$:
\begin{equation}
\label{eqn:avg-case-result}
\frac{E[n_\mathrm{new}]}{E[n]} = \frac{1}{\left\lbrack (1 - \epsilon) + \epsilon (\gamma/\rho) \right\rbrack}.
\end{equation}
\end{proof}
In particular, setting $\gamma = 0$ in (\ref{eqn:avg-case-result}) recovers the worst-case factor $1/(1 - \epsilon)$, while setting $\gamma = 1$ and $\epsilon = 1$ (sampling only from $\ginfset$) yields the best-case factor \edit{$\rho$}.
\begin{figure*}[!tb]
    \centering
    \begin{subfigure}[b]{0.27\textwidth}
        \centering
        \includegraphics[width=\textwidth]{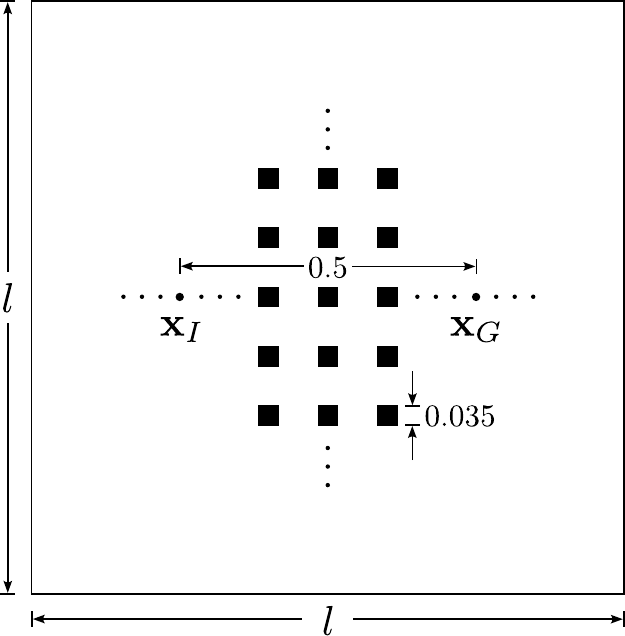}
        \captionsetup{justification=centering}
        \caption{}
        \label{subfig:abstract_experiments_repeating_rectangles}
    \end{subfigure}\hspace{0.02\textwidth}
    \begin{subfigure}[b]{0.27\textwidth}
        \centering
        \includegraphics[width=\textwidth]{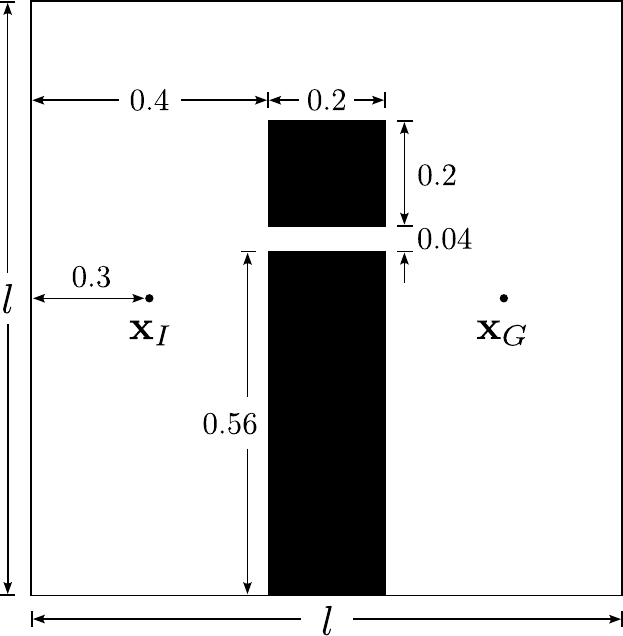}
        \captionsetup{justification=centering}
        \caption{}
        \label{subfig:abstract_experiments_narrow_passage_gap}
    \end{subfigure}\hspace{0.02\textwidth}
    \begin{subfigure}[b]{0.27\textwidth}
        \centering
        \includegraphics[width=\textwidth]{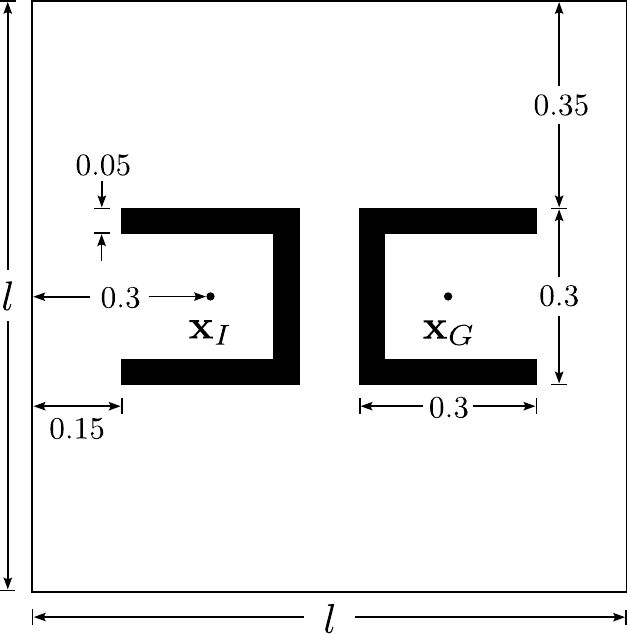}
        \captionsetup{justification=centering}
        \caption{}
        \label{subfig:abstract_experiments_double_enclosure}
    \end{subfigure}
    \caption{Two-dimensional illustrations of the abstract planning problems examined in Section~\ref{sec:abstract-problems}. These include complex planning problems containing (a) many homotopy classes, (b) narrow passage gap environment, and (c) double enclosures, with a problem domain of size $l = 1$. The $\mathbf{x}_I$ and $\mathbf{x}_G$ represent the initial and goal states, respectively.}
    \label{fig:abstract_experiments}
\end{figure*}

\begin{figure*}[!tb]
\centering
\begin{subfigure}[b]{\textwidth}
    \centering
    \includegraphics[width=0.85\textwidth]{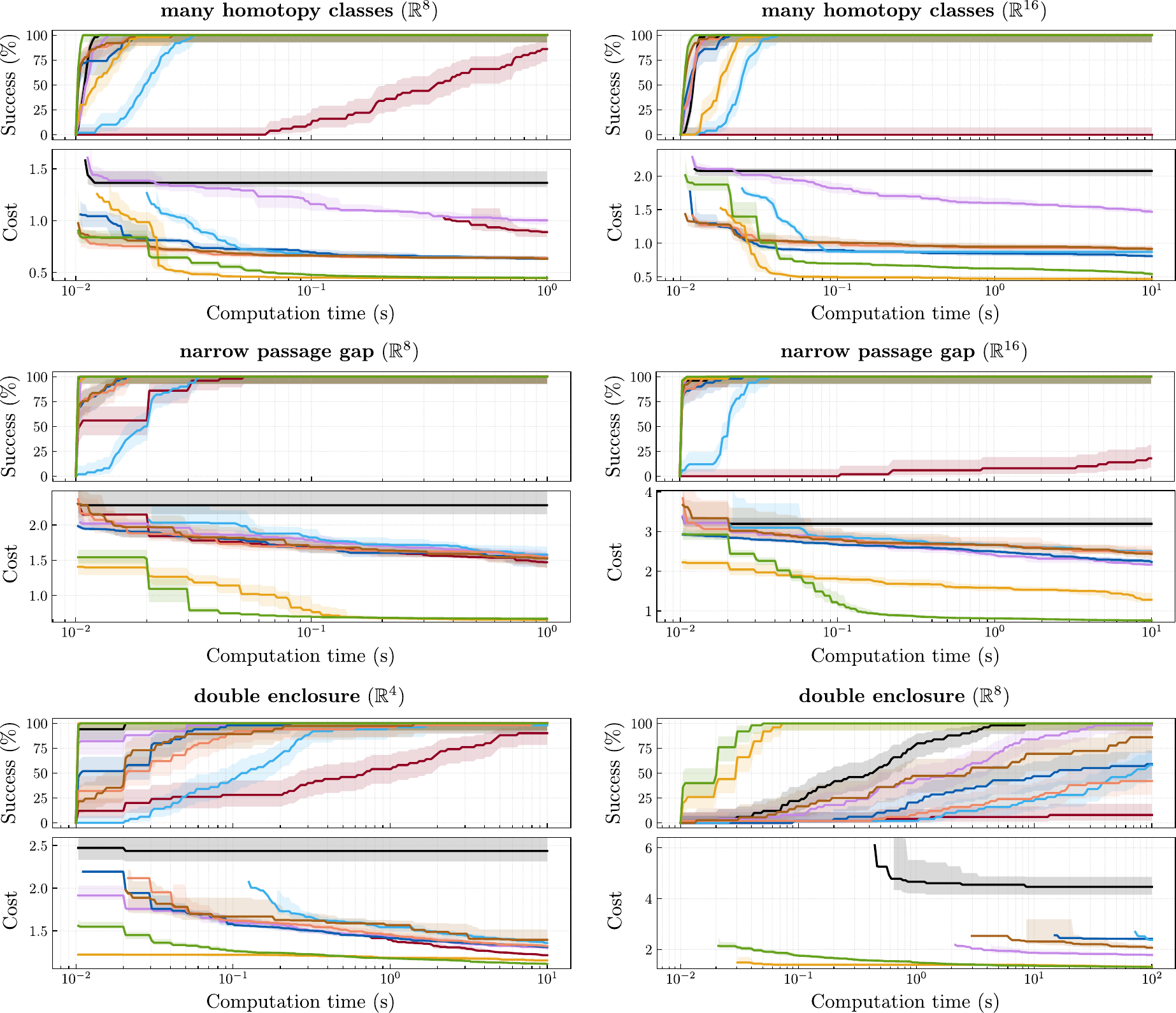}
    \vspace{2mm}
\end{subfigure}
\begin{subfigure}[b]{0.65\textwidth}
    \hspace*{1em}  
    \centering
    \includegraphics[width=\textwidth]{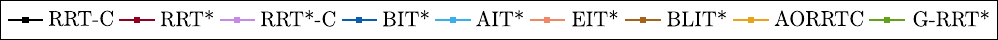}
\end{subfigure}
\vspace{2mm}
\caption{Planner performance versus runtime on the abstract planning problems described in Section~\ref{sec:abstract-problems}.
The success plots show the percentage of successful runs over time, while the cost plots present the median solution cost versus runtime for each planner.
Error bars and shaded regions denote non-parametric 99\% confidence intervals on the median.
Reported times and costs for AORRTC include path simplification, i.e., randomized shortcutting and B-spline smoothing.}
\label{fig:benchmark_graphs_abstract_problems}
\end{figure*}

\begin{table*}[!t]
    \sisetup{table-alignment-mode=format,table-align-text-post=false,table-number-alignment=center,table-format=2.3}
    \renewrobustcmd{\bfseries}{\fontseries{b}\selectfont}
    \renewrobustcmd{\boldmath}{}
    \centering
    \caption{Initial and final median solution costs for each planner on the abstract planning problems described in Section~\ref{sec:abstract-problems}, along with the median time to find the initial solution ($t_{\text{init}}$, in seconds). Costs are shown after randomized shortcutting and B-spline smoothing. Lower values indicate better performance, with the best cost highlighted in {\setlength{\fboxsep}{1.5pt}\colorbox{green!45}{dark green}} and the second-best in {\setlength{\fboxsep}{1.5pt}\colorbox{green!18}{light green}}.}
    \vspace{1mm}
    \label{tab:abstract-problems}
    \begin{threeparttable}
    \renewcommand{\arraystretch}{1.3}
    \setlength{\tabcolsep}{4pt}
    \resizebox{\textwidth}{!}{%
    \begin{NiceTabular}
    {
      @{} l
      S @{\hspace{0.4em}} S @{\hspace{0.4em}} S @{\hspace{0.4em}}
      S @{\hspace{0.4em}} S @{\hspace{0.4em}} S @{\hspace{0.4em}}
      S @{\hspace{0.4em}} S @{\hspace{0.4em}} S @{\hspace{0.4em}}
      S @{\hspace{0.4em}} S @{\hspace{0.4em}} S @{\hspace{0.4em}}
      S @{\hspace{0.4em}} S @{\hspace{0.4em}} S @{\hspace{0.4em}}
      S @{\hspace{0.4em}} S @{\hspace{0.4em}} S @{\hspace{0.4em}}
    }
    \CodeBefore
    \rowcolors{4}{}{gray!10,}
    \cellcolor{green!45}{9-3,12-4,10-6,12-7,12-9,12-10,7-12,12-13,5-15,12-16,6-18,12-19}
    \cellcolor{green!18}{10-3,11-4,9-6,11-7,10-9,11-10,12-12,11-13,12-15,9-16,12-18,6-19}
    \Body
    \toprule
    & \multicolumn{6}{c}{\textbf{many homotopy classes}}
    & \multicolumn{6}{c}{\textbf{narrow passage gap}}
    & \multicolumn{6}{c}{\textbf{double enclosure}}\\
    & \multicolumn{3}{c}{$\mathbb{R}^{8}$} & \multicolumn{3}{c}{$\mathbb{R}^{16}$} & \multicolumn{3}{c}{$\mathbb{R}^{8}$} & \multicolumn{3}{c}{$\mathbb{R}^{16}$} & \multicolumn{3}{c}{$\mathbb{R}^{4}$} & \multicolumn{3}{c}{$\mathbb{R}^{8}$}\\
    \cmidrule(lr){2-4}\cmidrule(lr){5-7}\cmidrule(lr){8-10}\cmidrule(lr){11-13}\cmidrule(lr){14-16}\cmidrule(lr){17-19}
    & {$t_{\text{init}}$} & {$c_{\text{init}}$} & {$c_{\text{final}}$} &
      {$t_{\text{init}}$} & {$c_{\text{init}}$} & {$c_{\text{final}}$} &
      {$t_{\text{init}}$} & {$c_{\text{init}}$} & {$c_{\text{final}}$} &
      {$t_{\text{init}}$} & {$c_{\text{init}}$} & {$c_{\text{final}}$} &
      {$t_{\text{init}}$} & {$c_{\text{init}}$} & {$c_{\text{final}}$} &
      {$t_{\text{init}}$} & {$c_{\text{init}}$} & {$c_{\text{final}}$}\\
    \midrule
    RRT-C   & 0.011 & 0.792 & 0.792 & 0.012 & 1.059 & 1.059 & 0.010 & 1.540 & 1.540 & 0.010 & 2.061 & 2.061 & 0.010 & 1.227 & 1.227 & 0.422 & 1.429 & 1.429 \\
    RRT*    & 0.339 & 0.589 & 0.589 & $\infty$ & $\infty$ & $\infty$ & 0.010 & 1.391 & 1.106 & $\infty$ & $\infty$ & $\infty$ & 0.695 & \bfseries1.158 & 1.124 & $\infty$ & $\infty$ & $\infty$ \\
    RRT*-C  & 0.011 & 0.813 & 0.629 & 0.012 & 1.089 & 0.850 & 0.010 & 1.409 & 1.268 & 0.010 & 2.123 & 2.040 & 0.010 & 1.199 & 1.125 & 1.957 & \bfseries1.346 & 1.299 \\
    BIT*    & 0.010 & 0.569 & 0.467 & 0.011 & 0.780 & 0.539 & 0.010 & 1.341 & 1.091 & 0.010 & \bfseries1.817 & 1.525 & 0.011 & 1.175 & 1.122 & 15.251 & 1.423 & 1.426 \\
    AIT*    & 0.020 & 0.643 & 0.462 & 0.024 & 0.921 & 0.557 & 0.020 & 1.373 & 1.170 & 0.020 & 1.984 & 1.749 & 0.116 & 1.175 & 1.129 & 69.468 & 1.484 & 1.433 \\
    EIT*    & 0.010 & \bfseries0.521 & 0.460 & 0.012 & 0.751 & 0.548 & 0.010 & 1.301 & 1.061 & 0.010 & 1.957 & 1.682 & 0.021 & 1.187 & 1.115 & $\infty$ & $\infty$ & $\infty$ \\
    BLIT*   & 0.010 & 0.528 & 0.462 & 0.011 & \bfseries0.745 & 0.568 & 0.010 & 1.288 & 1.073 & 0.010 & 1.921 & 1.564 & 0.022 & 1.197 & 1.165 & 14.642 & 1.407 & 1.397 \\
    AORRTC  & 0.012 & 0.986 & 0.444 & 0.018 & 1.302 & 0.468 & 0.010 & 1.397 & 0.654 & 0.010 & 2.206 & 1.283 & 0.010 & 1.223 & 1.153 & 0.030 & 1.398 & 1.319 \\
    G-RRT*  & 0.010 & 0.630 & \bfseries0.442 & 0.010 & 1.071 & \bfseries0.462 & 0.010 & \bfseries1.137 & \bfseries0.653 & 0.010 & 1.845 & \bfseries0.691 & 0.010 & 1.165 & \bfseries1.065 & 0.020 & 1.354 & \bfseries1.172 \\
    \bottomrule
    \end{NiceTabular}
    }
    \end{threeparttable}
\end{table*}

\section{Experiments}
\label{sec:experiments}

We evaluated the performance of G-RRT* by comparing it with several state-of-the-art sampling-based planners on abstract problems in $\mathbb{R}^4$, $\mathbb{R}^{8}$, and $\mathbb{R}^{16}$ (Section~\ref{sec:abstract-problems}), as well as on robotic manipulation problems in $\mathbb{R}^7$, $\mathbb{R}^{8}$, and $\mathbb{R}^{14}$ (Section~\ref{sec:manipulation_exp}).
The former problems test the planner's ability to find high-quality solutions in challenging, high-dimensional spaces, whereas the latter assesses performance on realistic robotic tasks.
\edit{
We compared G-RRT* with the Open Motion Planning Library~\citep[OMPL]{sucan2012open} implementations of RRT-Connect, RRT*, RRT*-Connect \citep{klemm2015rrt}, BIT* \citep{gammell2020batch}, AIT* and EIT* \citep{strub2022adaptively}, BLIT* \citep{wang2025asymptotically}, and AORRTC \citep{wilson2025aorrtc}.
}
All experiments were conducted using the Planner Developer Tools~\citep[PDT]{gammell2022planner} and Robowflex~\citep{kingston2022robowflex}\footnote{Simulations were performed in an Ubuntu 20.04 Docker container on an Intel Core i7-10875H CPU with 40 GB of RAM; all planners were implemented in C$++$.}.

All planners were evaluated with the optimization objective of minimizing path length in $\mathbb{R}^n$.
Each planner used the default edge length specified by OMPL, computed as a fixed fraction of the maximum possible distance between any two states in the space.
An RGG constant of $\eta = 1.001$ was applied to all planners, and the Euclidean distance (i.e., $L^2$ norm) for path length was used as the admissible cost heuristic.
BIT*-based planners used a batch size of 100, and G-RRT* employed a greedy bias ratio of $\epsilon = 0.9$ to encourage exploitation across all experiments.
Notably, AORRTC internally applies path simplification---specifically, randomized shortcutting~\citep{geraerts2007creating,hauser2010fast} and B-spline smoothing~\citep{pan2012collision}---as part of its planning process to improve solution quality.
To ensure a fair comparison, and because all reported times and costs for AORRTC include this simplification, we applied the same procedure to all planners as a post-processing step \emph{only}: (i) after an initial solution had been found, and (ii) for the final solution at the end of the planning time limit.
The costs of these post-processed, simplified paths are reported separately.\footnote{Simplification was evaluated only in separate post-processing experiments and did not affect convergence; the simplified paths were not reused during planning.}

\begin{figure*}[!tb]
   \centering
   \includegraphics[width=0.765\textwidth]{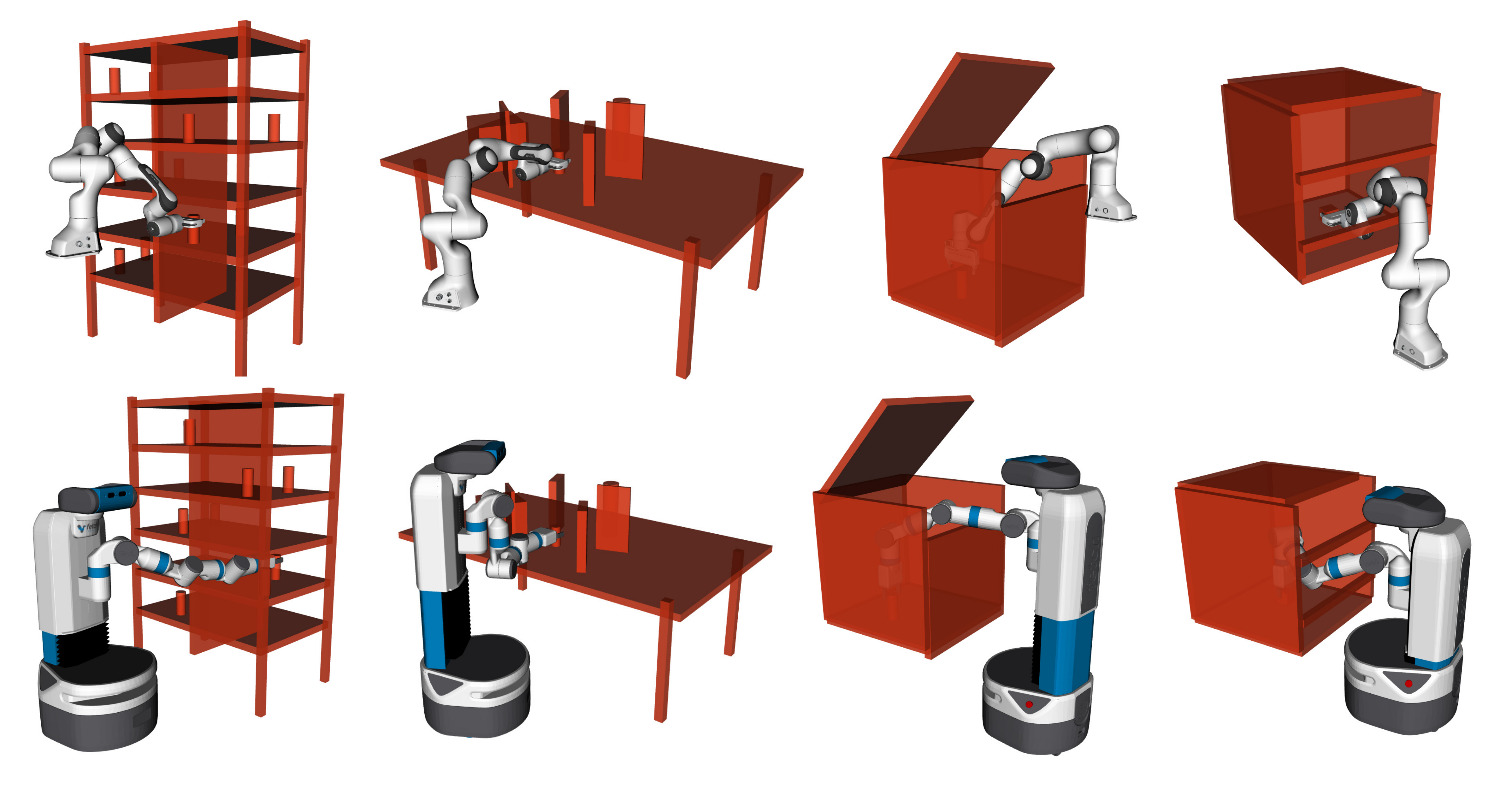}
\caption{Manipulation problems from the MotionBenchMaker dataset used in our experiments (see Section~\ref{sec:manipulation_exp}).
Each row shows a different robot platform (Panda or Fetch) evaluated on four benchmark tasks: \emph{bookshelf thin}, \emph{table pick}, \emph{box}, and \emph{cage}.}
   \label{fig:mbm_problems}
   \vspace*{-4.8mm}
\end{figure*}

\begin{figure}[!tb]
    \vspace{2mm}
    \centering
    \begin{subfigure}[b]{0.22\textwidth}
        \centering
        \fbox{\includegraphics[width=0.94\textwidth]{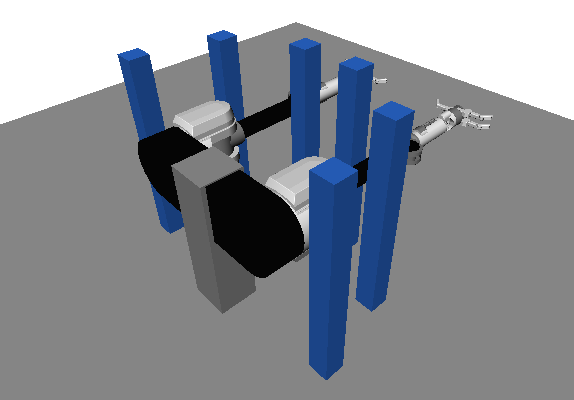}}
        \captionsetup{justification=centering}
        \caption{Start configuration}
        \label{subfig:cage_start_orbit}
    \end{subfigure}\hspace{0.01\textwidth}
    \begin{subfigure}[b]{0.22\textwidth}
        \centering
        \fbox{\includegraphics[width=0.94\textwidth]{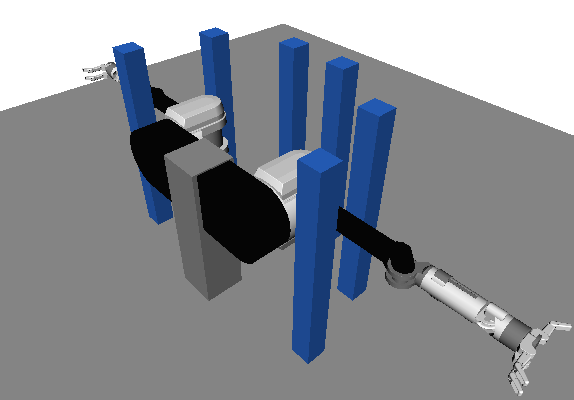}}
        \captionsetup{justification=centering}
        \caption{Goal configuration}
        \label{subfig:cage_goal_orbit}
    \end{subfigure}
    \caption{A dual-arm manipulation problem for the Barrett WAM Arm in $\mathbb{R}^{14}$. Starting with (a) both arms pointing in the same direction (a), they must be moved to (b) extend in opposite directions without hitting the cage.}
    \label{fig:manipulation_experiments_cage}
\end{figure}

\begin{figure*}[!tb]
    \centering
    \begin{subfigure}[b]{\textwidth}
        \centering
        \includegraphics[width=0.85\textwidth]{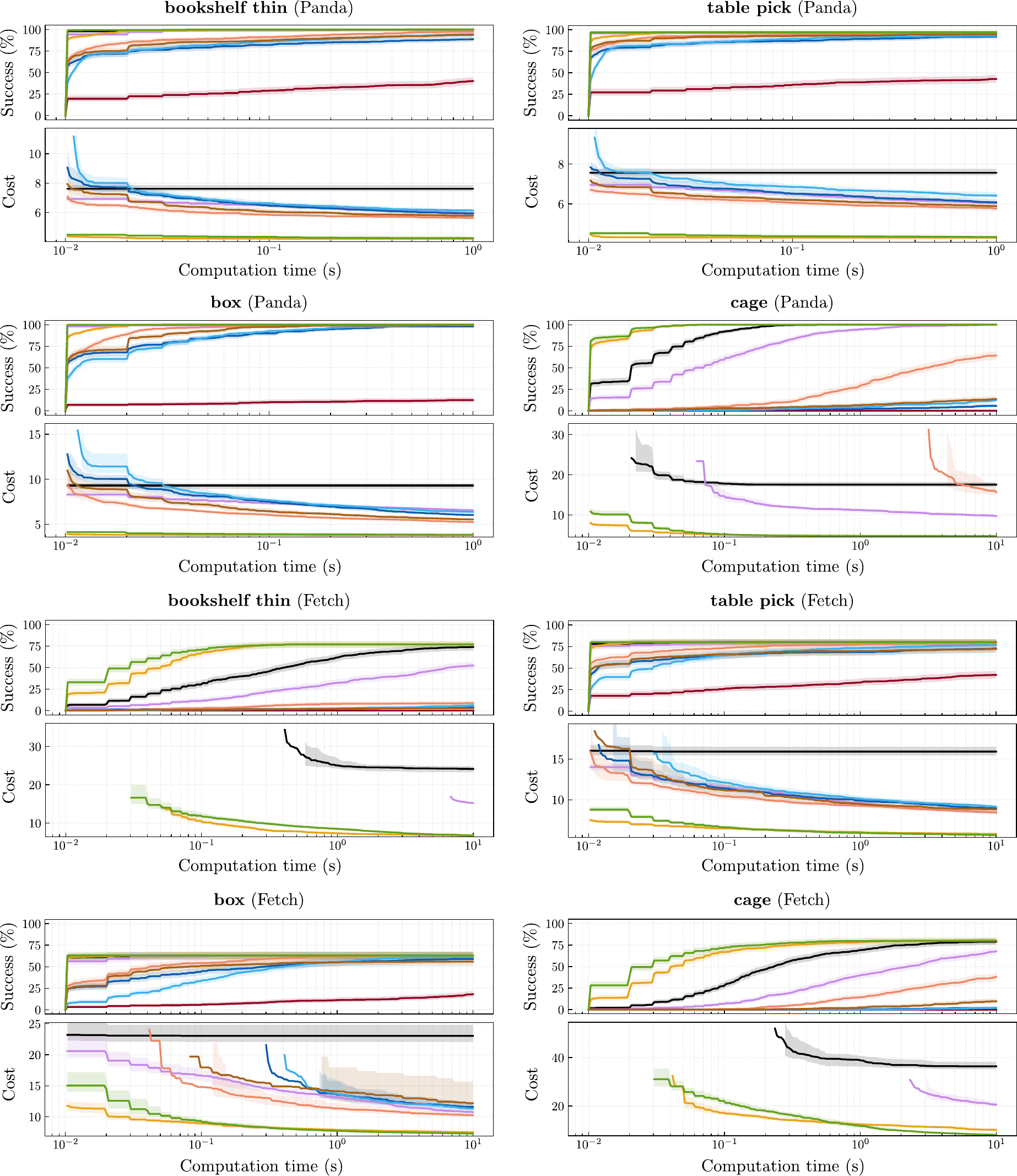}
        \vspace{2mm}
    \end{subfigure}
    \begin{subfigure}[b]{0.65\textwidth}
        \centering
        \includegraphics[width=\textwidth]{figures/experiments/legend.pdf}
    \end{subfigure}
    \vspace{2mm}
\caption{Planner performance versus running time on the Panda ($\mathbb{R}^{7}$) and Fetch ($\mathbb{R}^{8}$) problems from the MotionBenchMaker dataset (see Section~\ref{sec:manipulation_exp} and Figure~\ref{fig:mbm_problems}). 
The success plots show the percentage of successful runs over time, while the cost plots present the median solution cost versus runtime for each planner. 
Error bars and shaded regions denote non-parametric 99\% confidence intervals on the median. 
Reported times and costs for AORRTC include path simplification, i.e., randomized shortcutting and B-spline smoothing.}
    \label{fig:manipulation-results-mbm-problems}
\end{figure*}

\begin{figure}[!tb]
    \centering
    \begin{subfigure}[b]{\linewidth}
        \centering
        \includegraphics[width=0.95\textwidth]{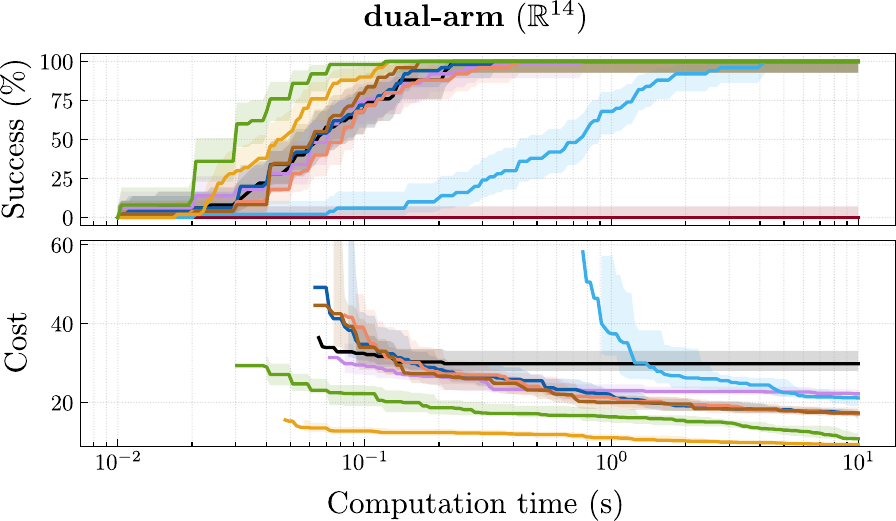}
        \vspace{-1em}
        \label{subfig:exp1R8}
    \end{subfigure}
    \begin{subfigure}[b]{\linewidth}
        \centering
        \includegraphics[width=\textwidth]{figures/experiments/legend.pdf}
    \end{subfigure}
    \vspace{-2mm}
\caption{Planner performance versus runtime on the dual-arm problem described in Section~\ref{sec:manipulation_exp} (see Figure~\ref{fig:manipulation_experiments_cage}).
The success plots show the percentage of successful runs over time, and the cost plots present the median solution cost versus runtime for each planner.
Error bars and shaded regions denote non-parametric 99\% confidence intervals on the median.
Each planner was run for 50 trials with a runtime limit of 10~s in $\mathbb{R}^{14}$.
Reported times and costs for AORRTC include path simplification, i.e., randomized shortcutting and B-spline smoothing.}
    \label{fig:manipulation-results-dual-arm}
\end{figure}

\subsection{Abstract Planning Problems}
\label{sec:abstract-problems}

The planners were tested on three simulated planning problems in $\mathbb{R}^4$, $\mathbb{R}^8$, and $\mathbb{R}^{16}$.
To provide some intuition, these problems are illustrated in two dimensions in Figure~\ref{fig:abstract_experiments}.
The $\mathbb{R}^2$ versions are shown only for visualization purposes, since they are too simple to differentiate planner performance in a meaningful way.
The higher-dimensional problem instances were created by extending the same obstacle layouts uniformly along each additional dimension.
The first problem consists of many axis-aligned hypercubes, with the start and goal located at $[-0.25, 0, \dots, 0]^{\top}$ and $[0.25, 0, \dots, 0]^{\top}$, respectively (Figure~\ref{subfig:abstract_experiments_repeating_rectangles}).
The second problem contains a wall with a narrow passage gap, such that only two homotopy classes in all dimensions (Figure~\ref{subfig:abstract_experiments_narrow_passage_gap}), with start and goal at $[-0.3, 0, \dots, 0]^{\top}$ and $[0.3, 0, \dots, 0]^{\top}$.
The final problem includes two hollow, axis-aligned hypercubes with an opening, enclosing the start and goal, located at $[-0.3, 0, \dots, 0]^{\top}$ and $[0.3, 0, \dots, 0]^{\top}$, respectively (Figure~\ref{subfig:abstract_experiments_double_enclosure}).

Each planner was allocated a planning time proportional to the difficulty of the problem and the dimensionality of the state space.
We ran 100 trials for the first two problems and 50 for the last, each initialized with a different pseudorandom seed.
During each trial, solution costs were recorded at fixed time intervals ($10^{-4}$ s) by a separate monitoring thread.
If no solution was found at a given time, the cost was assigned an infinite value.
We then reported the median costs to mitigate any bias from unsolved trials.
Figure~\ref{fig:benchmark_graphs_abstract_problems} shows each planner's median solution cost and success rate over computation time.
To ensure a fair comparison with AORRTC, we also applied path simplification to all of the initial and final solutions, as summarized in Table~\ref{tab:abstract-problems}.

\looseness=-1
The results highlight the importance of maintaining two rapidly growing trees to quickly find initial solutions in high-dimensional problems.
\edit{
As the median times to the initial solution in Table~\ref{tab:abstract-problems} show, G-RRT* discovers initial solutions at least as fast as RRT-Connect across all tested problems, and faster than every other planner on the harder ones.
}
Unlike RRT-Connect, G-RRT* is an anytime algorithm that converges asymptotically to optimal solutions.
From Figure~\ref{fig:benchmark_graphs_abstract_problems}, we observe that AORRTC reports lower median solution costs than G-RRT* because its paths are already simplified.
However, G-RRT* reaches the same final solution costs as AORRTC, and does so faster than AORRTC and all other asymptotically optimal planners, even \emph{without} path simplification.
These results demonstrate the effectiveness of greedy informed sampling.
Moreover, as shown in Table~\ref{tab:abstract-problems}, when the same simplification process is applied uniformly to all planners, their initial solution costs become similar; yet G-RRT* consistently yields lower final solution costs for most problems.
Simplification improves path quality only locally, G-RRT* achieves global improvement through RRT*-style rewiring combined with its greedy biasing strategy.

Some of the most constrained problems further illustrate the strengths of G-RRT*.
In the narrow passage gap problem, all asymptotically optimal planners struggle to find a path through the passage, yet G-RRT* is the only one that consistently succeeds within the available time budget.
\edit{
This demonstrates the effectiveness of the greedy informed set relative to the standard informed set used by AORRTC and the other asymptotically optimal planners.
}

In the more difficult \emph{bug trap} scenario involving double enclosures symmetric about the start and goal, all planners struggled as dimensionality increased.
\edit{
G-RRT*, RRT-Connect, and AORRTC found initial solutions much faster than the other batch-based planners, emphasizing the importance of maintaining two greedily connecting trees in domains with several biased Voronoi regions~\citep{yershova2005dynamic}.}
Nevertheless, after simplification, G-RRT* achieves lower median costs than competing planners, confirming its ability to find higher-quality solutions within the allotted planning time.

\begin{table*}[!t]
\centering
\captionsetup[subtable]{justification=centering}
\caption{Initial and final median solution costs for each planner on the manipulation problems described in Section~\ref{sec:manipulation_exp}, along with the median time to find the initial solution ($t_{\text{init}}$, in seconds). Costs are shown after randomized shortcutting and B-spline smoothing. Lower values indicate better performance, with the best cost highlighted in {\setlength{\fboxsep}{1.5pt}\colorbox{green!45}{dark green}} and the second-best in {\setlength{\fboxsep}{1.5pt}\colorbox{green!18}{light green}}.}
\vspace{1mm}
\label{tab:manipulation-problems}
\begin{subtable}[t]{0.73\textwidth}
\centering
\sisetup{table-alignment-mode=format,table-align-text-post=false,table-number-alignment=center,table-format=2.3}
\renewrobustcmd{\bfseries}{\fontseries{b}\selectfont}
\renewrobustcmd{\boldmath}{}
\renewcommand{\arraystretch}{1.3}
\scalebox{0.72}{%
\begin{NiceTabular}[baseline=t]
{
  @{} cl
  S @{\hspace{0.6em}} S @{\hspace{0.6em}} S @{\hspace{0.6em}}
  S @{\hspace{0.6em}} S @{\hspace{0.6em}} S @{\hspace{0.6em}}
  S @{\hspace{0.6em}} S @{\hspace{0.6em}} S @{\hspace{0.6em}}
  S @{\hspace{0.6em}} S @{\hspace{0.6em}} S @{\hspace{0.6em}}
}
\CodeBefore
\rowlistcolors{2}{gray!10, }
\columncolor{white}{1}
\cellcolor{green!45}{10-4,11-5,10-7,10-8,10-10,10-11,11-13,11-14}
\cellcolor{green!18}{11-4,10-5,11-7,11-8,11-10,11-11,10-13,10-14}
\cellcolor{green!45}{20-4,20-5,20-7,20-8,20-10,20-11,14-13,20-14}
\cellcolor{green!18}{12-4,19-5,19-7,19-8,17-10,19-11,20-13,19-14}
\Body
\toprule
&
& \multicolumn{3}{c}{\textbf{bookshelf thin}}
& \multicolumn{3}{c}{\textbf{table pick}}
& \multicolumn{3}{c}{\textbf{box}}
& \multicolumn{3}{c}{\textbf{cage}}\\
\cmidrule(lr){3-5}\cmidrule(lr){6-8}\cmidrule(lr){9-11}\cmidrule(lr){12-14}
&
& {$t_{\text{init}}$} & {$c_{\text{init}}$} & {$c_{\text{final}}$} &
{$t_{\text{init}}$} & {$c_{\text{init}}$} & {$c_{\text{final}}$} &
{$t_{\text{init}}$} & {$c_{\text{init}}$} & {$c_{\text{final}}$} &
{$t_{\text{init}}$} & {$c_{\text{init}}$} & {$c_{\text{final}}$}\\
\midrule
\Block{9-1}{\rotatebox[origin=c]{90}{\textbf{Panda} ($\mathbb{R}^7$)}}
& RRT-C   & 0.010 & 4.860 & 4.860 & 0.010 & 4.921 & 4.921 & 0.010 & 5.427 & 5.427 & 0.020 & 7.845 & 7.845 \\
& RRT*    & $\infty$ & $\infty$ & $\infty$ & $\infty$ & $\infty$ & $\infty$ & $\infty$ & $\infty$ & $\infty$ & $\infty$ & $\infty$ & $\infty$ \\
& RRT*-C  & 0.010 & 4.847 & 4.657 & 0.010 & 4.885 & 4.712 & 0.010 & 5.324 & 4.809 & 0.061 & 7.778 & 7.020 \\
& BIT*    & 0.010 & 4.893 & 4.605 & 0.010 & 4.910 & 4.887 & 0.010 & 5.383 & 4.370 & $\infty$ & $\infty$ & $\infty$ \\
& AIT*    & 0.011 & 4.890 & 4.588 & 0.011 & 4.955 & 4.933 & 0.011 & 5.604 & 4.455 & $\infty$ & $\infty$ & $\infty$ \\
& EIT*    & 0.010 & 4.648 & 4.476 & 0.010 & 4.712 & 4.752 & 0.010 & 4.858 & 4.085 & 7.492 & 12.294 & 10.568 \\
& BLIT*   & 0.010 & 4.979 & 4.638 & 0.010 & 4.908 & 4.657 & 0.010 & 5.590 & 4.344 & $\infty$ & $\infty$ & $\infty$ \\
& AORRTC  & 0.010 & \bfseries4.265 & 4.221 & 0.010 & \bfseries4.326 & \bfseries4.309 & 0.010 & \bfseries3.848 & \bfseries3.793 & 0.010 & 6.819 & 4.725 \\
& G-RRT*  & 0.010 & 4.333 & \bfseries4.219 & 0.010 & 4.365 & 4.316 & 0.010 & 3.927 & 3.800 & 0.010 & \bfseries6.680 & \bfseries4.711 \\
\midrule
\Block{9-1}{\rotatebox[origin=c]{90}{\textbf{Fetch} ($\mathbb{R}^8$)}}
& RRT-C   & 0.401 & 11.522 & 11.522 & 0.010 & 9.318 & 9.318 & 0.010 & 13.039 & 13.039 & 0.231 & 18.372 & 18.372 \\
& RRT*    & $\infty$ & $\infty$ & $\infty$ & $\infty$ & $\infty$ & $\infty$ & $\infty$ & $\infty$ & $\infty$ & $\infty$ & $\infty$ & $\infty$ \\
& RRT*-C  & 6.579 & 12.846 & 12.327 & 0.010 & 8.879 & 6.879 & 0.010 & 12.451 & 8.827 & 2.236 & \bfseries16.682 & 14.562 \\
& BIT*    & $\infty$ & $\infty$ & $\infty$ & 0.012 & 8.326 & 6.845 & 0.289 & 11.091 & 8.879 & $\infty$ & $\infty$ & $\infty$ \\
& AIT*    & $\infty$ & $\infty$ & $\infty$ & 0.030 & 8.461 & 6.862 & 0.409 & 11.465 & 8.671 & $\infty$ & $\infty$ & $\infty$ \\
& EIT*    & $\infty$ & $\infty$ & $\infty$ & 0.010 & 8.070 & 6.563 & 0.042 & 11.007 & 8.148 & $\infty$ & $\infty$ & $\infty$ \\
& BLIT*   & $\infty$ & $\infty$ & $\infty$ & 0.020 & 9.167 & 7.270 & $\infty$ & $\infty$ & $\infty$ & $\infty$ & $\infty$ & $\infty$ \\
& AORRTC  & 0.050 & 11.828 & 6.740 & 0.010 & 7.256 & 5.786 & 0.010 & 11.106 & 7.496 & 0.041 & 18.044 & 10.168 \\
& G-RRT*  & 0.030 & \bfseries10.838 & \bfseries6.377 & 0.010 & \bfseries7.003 & \bfseries5.411 & 0.010 & \bfseries10.726 & \bfseries6.943 & 0.030 & 17.297 & \bfseries7.481 \\
\bottomrule
\end{NiceTabular}
}
\end{subtable}
\hfill
\begin{subtable}[t]{0.25\textwidth}
\centering
\sisetup{table-alignment-mode=format,table-align-text-post=false,table-number-alignment=center,table-format=2.3}
\renewrobustcmd{\bfseries}{\fontseries{b}\selectfont}
\renewrobustcmd{\boldmath}{}
\renewcommand{\arraystretch}{1.3}
\scalebox{0.72}{%
\begin{NiceTabular}[baseline=t]{@{} l S @{\hspace{0.6em}} S @{\hspace{0.6em}} S @{\hspace{0.6em}} @{}}
\CodeBefore
\rowlistcolors{2}{gray!10, }
\cellcolor{green!45}{11-3,11-4}
\cellcolor{green!18}{10-3,10-4}
\Body
\toprule
& \multicolumn{3}{c}{\textbf{dual-arm} ($\mathbb{R}^{14}$)}\\
\cmidrule(lr){2-4}
& {$t_{\text{init}}$} & {$c_{\text{init}}$} & {$c_{\text{final}}$}\\
\midrule
RRT-C   & 0.064 & 13.625 & 13.625 \\
RRT*    & $\infty$ & $\infty$ & $\infty$ \\
RRT*-C  & 0.070 & 13.660 & 11.722 \\
BIT*    & 0.061 & 15.186 & 10.584 \\
AIT*    & 0.733 & 14.134 & 11.203 \\
EIT*    & 0.080 & 13.646 & 10.448 \\
BLIT*   & 0.061 & 13.723 & 10.602 \\
AORRTC  & 0.045 & 12.794 & 9.271 \\
G-RRT*  & 0.030 & \bfseries12.614 & \bfseries7.590 \\
\bottomrule
\end{NiceTabular}
}
\end{subtable}
\end{table*}

\subsection{Manipulation Problems}
\label{sec:manipulation_exp}

The planners were also evaluated on manipulation tasks from the MotionBenchMaker dataset~\citep{chamzas2021motionbenchmaker}, using the Panda ($\mathbb{R}^{7}$) and Fetch ($\mathbb{R}^{8}$) robot platforms.
The dataset provides a standardized set of seven manipulation benchmarks across multiple robot models, each containing 100 planning problems with varying start and goal configurations and workspace obstacles.
For each robot, we selected four environments of increasing difficulty---\textit{bookshelf thin}, \textit{table pick}, \textit{box}, and \textit{cage}---ranging from relatively open reaching motions to tightly constrained scenarios (Figure~\ref{fig:mbm_problems}).
In addition, the planners were evaluated on a more challenging, higher-dimensional dual-arm manipulation task involving two Barrett WAM arms with a combined 14 degrees of freedom ($\mathbb{R}^{14}$) in the Open Robotics Automation Virtual Environment~\citep{diankov2010automated}, as shown in Figure~\ref{fig:manipulation_experiments_cage}.
The objective is to move the arms from an initial configuration, where they point in the forward direction, to a goal configuration where they are extended outward in opposite directions, without colliding with the surrounding cage.
\edit{
For all manipulation problems, including the dual-arm task, we used VAMP~\citep{thomason2024motions} as the collision-checking backend for the OMPL planners.}
Performance results for all planners, optimizing path length in $\mathbb{R}^n$, are presented in Figures~\ref{fig:manipulation-results-mbm-problems} and~\ref{fig:manipulation-results-dual-arm}, with post-simplification statistics summarized in Table~\ref{tab:manipulation-problems}.

The observed performance trends closely match those of the abstract problems discussed in Section~\ref{sec:abstract-problems}.
\edit{
G-RRT* finds initial solutions at least as fast as RRT-Connect, and faster than the other planners on the harder problems, then uses the remaining planning time to converge toward optimality.}
In most planning problems from the MotionBenchMaker dataset, G-RRT* achieves final solution costs comparable to those of AORRTC, but does so \emph{without} requiring any path simplification---indicating faster convergence relative to the other asymptotically optimal planners.
\edit{
Even in the hardest Fetch problems where most planners fail to find any feasible path, G-RRT* yields lower final costs than every other planner, including AORRTC.
}
Notably, for the dual-arm problem, G-RRT* not only finds solutions faster but also produces the lowest final costs after simplification, highlighting the benefit of the greedy heuristic for high-dimensional planning problems.

\subsection{Ablation Studies}
\label{sec:ablation-studies}

\edit{
Finally, we present two ablation studies of greedy informed sampling.
We first examine the sensitivity of G-RRT* to the greedy biasing ratio $\epsilon$ in Section~\ref{sec:ablation-first}, then isolate the benefit of the greedy informed set by comparing it against the standard informed set with path simplification in Section~\ref{sec:ablation-second}.
}

\subsubsection{Effect of Greedy Biasing Ratio}
\label{sec:ablation-first}

\edit{
The greedy biasing ratio $\epsilon$ decides how often G-RRT* draws samples from the greedy informed set rather than the informed set.
To assess sensitivity to this parameter, we varied $\epsilon$ across eleven values from $0$ to $1$ in steps of $0.1$ and reran every abstract and manipulation problem, with all other settings fixed.
Figure~\ref{fig:sensitivity-grid} shows the resulting median solution cost over computation time for each problem, with curves colored by $\epsilon$.
On the easier problems, such as the Panda tasks, the curves overlap and convergence is largely insensitive to $\epsilon$.
On the harder problems, a larger $\epsilon$ converges faster to lower-cost solutions, and the effect grows with the difficulty of the problem, most clearly on the dual-arm cage, narrow passage gap, many homotopy classes, and Fetch cage problems.
Disabling the greedy bias, $\epsilon=0$, gives the slowest convergence and the highest final cost throughout.
}

\begin{figure*}[!tb]
\centering
\includegraphics[width=\textwidth]{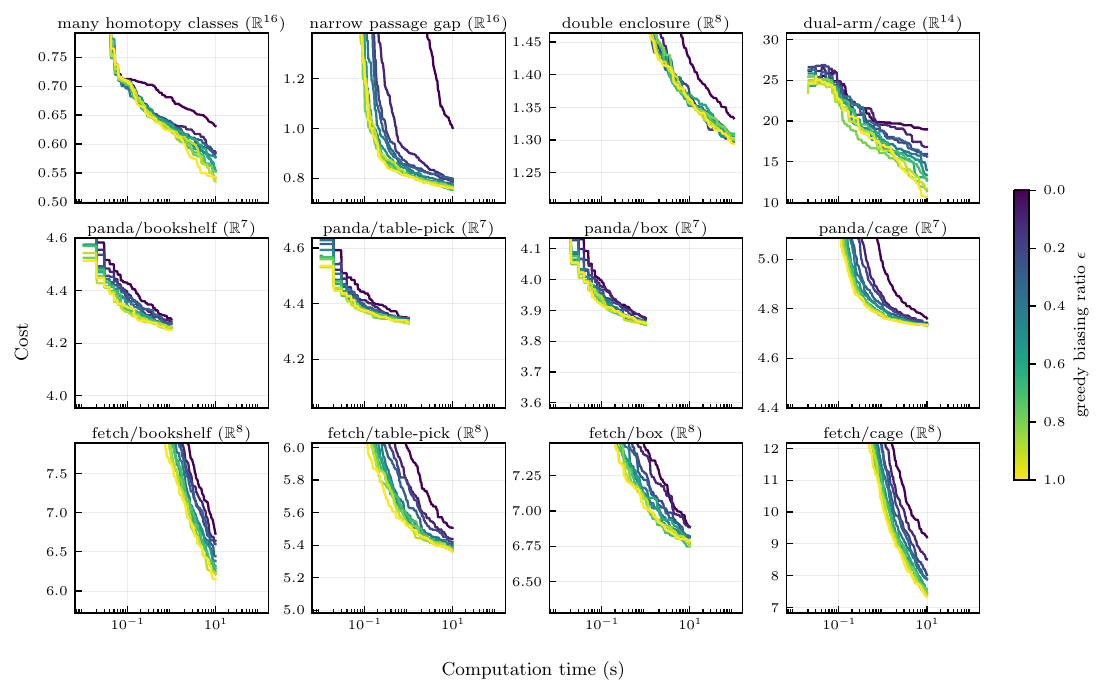}
\caption{Sensitivity of G-RRT* to the greedy biasing ratio $\epsilon$. Each panel shows the median solution cost over computation time for one benchmark, with curves colored by $\epsilon$ from $0$ to $1$ (see colorbar). The easier problems, such as the Panda tasks, are largely insensitive to $\epsilon$, while on the harder problems a larger $\epsilon$ converges faster to lower-cost solutions, most strongly on the dual-arm cage problem.
}
\label{fig:sensitivity-grid}
\end{figure*}

\subsubsection{Path Simplification vs.\\Greedy Informed Sampling}
\label{sec:ablation-second}

\looseness=-1
\edit{
A natural alternative to greedy informed sampling is to shorten the current solution with a path simplification step and then sample from the standard informed set built around the simplified cost.
We evaluate this as a controlled variant of G-RRT*, denoted Simpl.+Inf., that replaces the greedy informed set with the standard informed set and adds the same randomized shortcutting and B-spline smoothing used for the other planners.
Both variants share the same bidirectional search, heuristic gating, and delayed rewiring, so the comparison isolates the effect of the greedy informed set.
We report the cost of {G-RRT*} without any post-processing.
Table~\ref{tab:informed-set-ablation} reports the median final cost of both variants on the manipulation problems.
G-RRT* yields a lower final cost on every problem, even though its solutions are never simplified.
The two variants are within half a percent on the easier Panda problems, where a short initial path leaves little to improve, but the gap grows with the difficulty of the problem, reaching $12\%$ on the Fetch cage problem and $36\%$ on the dual-arm cage problem.
}

\edit{
This gap reflects a structural difference between the two strategies.
Path simplification shortens a solution only locally and within its homotopy class, whereas the greedy informed set concentrates global search on the region that can lower the cost, which G-RRT* then exploits through continued rewiring.
Figure~\ref{fig:sampling-bound} shows this on the Fetch cage problem, where the greedy informed set maintains a tighter cost bound and a lower solution cost throughout planning than the Simpl.+Inf. variant.
Path simplification with informed sampling therefore does not match greedy informed sampling, and its disadvantage grows with the difficulty of the problem.
}

\begin{figure}[!tb]
    \centering
    \begin{subfigure}[b]{\linewidth}
        \centering
        \includegraphics[width=\textwidth]{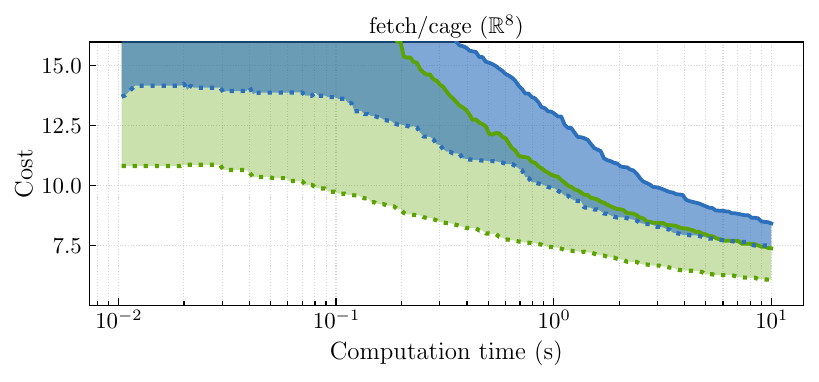}
        \vspace{-1.5em}
    \end{subfigure}
    \begin{subfigure}[b]{\linewidth}
        \centering
        \includegraphics[width=0.85\textwidth]{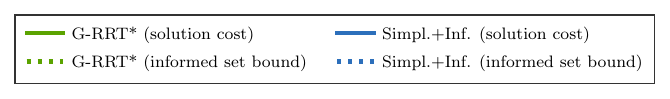}
    \end{subfigure}
    \vspace{-4mm}
\caption{
\looseness=-1
G-RRT* against the Simpl.+Inf. variant (standard informed set with path simplification) on the Fetch cage problem. Solid lines show the median solution cost and dotted lines the corresponding informed-set cost bound. The greedy informed set maintains a tighter bound and a lower solution cost throughout planning.}
    \label{fig:sampling-bound}
\end{figure}

\begin{table}[!t]
\centering
\setlength{\tabcolsep}{1.5pt}
\renewcommand{\arraystretch}{1.1}
\caption{Final median solution cost of G-RRT* against the Simpl.+Inf. variant on the manipulation problems. $\Delta$ is the reduction in final cost of G-RRT* relative to Simpl.+Inf. Lower cost is better, and the lower cost in each row is shown in \textbf{bold}.}
\label{tab:informed-set-ablation}
\begin{tabular}{@{}l@{\hspace{7pt}} c@{\hspace{7pt}} c@{\hspace{7pt}} c@{}}
\toprule
& G-RRT* & Simpl.+Inf. & $\Delta$ \\
\midrule
\multicolumn{4}{@{}l}{\textbf{Panda} ($\mathbb{R}^7$)}\\
\quad bookshelf thin        & \bfseries4.258 & 4.260 & 0.0\% \\
\quad table pick            & \bfseries4.341 & 4.345 & 0.1\% \\
\quad box                   & \bfseries3.857 & 3.863 & 0.2\% \\
\quad cage                  & \bfseries4.729 & 4.754 & 0.5\% \\
\midrule
\multicolumn{4}{@{}l}{\textbf{Fetch} ($\mathbb{R}^8$)}\\
\quad bookshelf thin        & \bfseries6.826 & 7.108 & 4.0\% \\
\quad table pick            & \bfseries5.637 & 5.781 & 2.5\% \\
\quad box                   & \bfseries7.252 & 7.457 & 2.7\% \\
\quad cage                  & \bfseries8.234 & 9.360 & 12.0\% \\
\midrule
\multicolumn{4}{@{}l}{\textbf{Dual-arm} ($\mathbb{R}^{14}$)}\\
\quad cage                  & \bfseries10.932 & 17.165 & 36.3\% \\
\bottomrule
\end{tabular}
\end{table}

\section{Conclusion}
\label{sec:conclusion}

\looseness=-1
In this paper, we build on our earlier introduction of the greedy informed set~\citep{kyaw2022energy}, and provide the first formal analysis of its behaviour within sampling-based planners.
We show that the greedy informed set contains all states along the optimal path whenever the current and optimal solutions share the same homotopy class.
Conversely, when the two paths belong to different homotopy classes, exclusively relying on the greedy informed set can prevent convergence to the global optimum.
These findings highlight the need to appropriately balance greedy exploitation with broader exploration through the full informed set to preserve asymptotic optimality.
To this end, we present G-RRT*, an algorithm that uses bidirectional search to quickly find initial solutions and then leverages the greedy informed set to focus the subsequent search, while preserving asymptotic optimality guarantees.

We evaluate G-RRT* on both abstract planning problems and manipulation tasks across a range of dimensions.
Our results show that G-RRT* finds initial solutions as quickly as RRT-Connect and then uses the remaining planning time to improve them, converging faster than all other asymptotically optimal planners.
It achieves these gains \textit{without} relying on path simplification, unlike AORRTC, and applying simplification as a post-processing step yields even lower solution costs.
Overall, the experiments demonstrate that incorporating greedy informed sampling can substantially accelerate convergence to high-quality solutions in informed planners.

There are several promising directions for future work.
Because G-RRT* currently draws a single sample per iteration, extending greedy informed sampling to batched updates, as in BIT*, could further improve performance.
Additionally, we are investigating ways to define and exploit promising regions of the state space; focusing sampling on these regions may improve both the efficiency and convergence rate of sampling-based motion planners for high-dimensional problems.

\backmatter

\bmhead{Acknowledgements}
The authors would like to thank Yunfan Lu for dedicating time to assist with the proofs in this manuscript.

\bibliography{main}

\end{document}